\documentclass[a4paper, 12pt]{amsart}

\usepackage[colorlinks]{hyperref}

\usepackage{microtype}
\usepackage{graphicx}
\usepackage{subfigure}
\usepackage{booktabs} 

\usepackage{amsmath,amsfonts,amssymb,amsopn,amscd,amsthm}
\usepackage{comment}
\usepackage{dsfont}
\usepackage{graphicx}
\usepackage{color}
\usepackage[left=3.5cm,top=2.5cm,bottom=2cm,right=3cm]{geometry}
\usepackage{tikz-cd}

\setlength{\hoffset}{0cm}
\setlength{\textwidth}{16cm}
\setlength{\voffset}{0cm}
\setlength{\textheight}{25cm}
\setlength{\oddsidemargin}{0cm}
\setlength{\evensidemargin}{0cm}



\let\oldtocsection=\tocsection
 
\let\oldtocsubsection=\tocsubsection
 
\let\oldtocsubsubsection=\tocsubsubsection
 
\renewcommand{\tocsection}[2]{\hspace{0em}\oldtocsection{#1}{#2}}
\renewcommand{\tocsubsection}[2]{\hspace{2em}\oldtocsubsection{#1}{#2}}
\renewcommand{\tocsubsubsection}[2]{\hspace{4em}\oldtocsubsubsection{#1}{#2}}

\setcounter{tocdepth}{3}

\usepackage{amsmath,amsfonts,amssymb,amsopn,amscd,amsthm}
\usepackage{stmaryrd}
\usepackage{tikz}
\usepackage{adjustbox}
\usepackage{tikz-cd}
\usepackage{dsfont}
\usepackage{algorithm, algorithmic}

\usepackage[foot]{amsaddr}

\title[Fibered Auto-Encoders]{Geodesics in fibered latent spaces: A geometric approach to learning correspondences between conditions}

\date{}

\author[Daouda]{Tariq \textsc{Daouda}*}
\author[Chhaibi]{Reda \textsc{Chhaibi}*}
\author[Tossou]{Prudencio \textsc{Tossou}}
\author[Villani]{Alexandra-Chlo\'e \textsc{Villani}}

\address{*: Equal contribution}
\address[Daouda and Villani]{Massachusetts General Hospital, Boston, MA \& Harvard Medical School, Boston, MA \& Broad Institute, Cambridge, MA}
\address[Chhaibi]{Universit\'e Paul Sabatier, Toulouse 3 -- Institut de math\'ematiques de Toulouse (IMT) -- 118, route de Narbonne, 31400, Toulouse, France}
\address[Tossou]{InvivoAI, Montreal, QC \& Universit\'e Laval, Qu\'ebec, QC}

\email{tdaouda@broadinstitute.org, reda.chhaibi@math.univ-toulouse.fr}
\email{prudencio@invivoai.com,
avillani@mgh.harvard.edu}

\allowdisplaybreaks[4]

\def\half{\frac{1}{2}}

\def\1{{\mathbf 1}}

\DeclareMathOperator{\Span}{Span}
\DeclareMathOperator{\id}{id}


\def\N{{\mathbb N}}

\def\R{{\mathbb R}}

\def\P{{\mathbb P}}
\def\E{{\mathbb E}}


\def\Cc{{\mathcal C}}
\def\Dc{{\mathcal D}}
\def\Ec{{\mathcal E}}

\def\Lc{{\mathcal L}}

\def\Nc{{\mathcal N}}

\def\Xc{{\mathcal X}}



\newtheorem{thm}{Theorem}[section]

\newtheorem{questions}[thm]{Questions}

\newtheorem{lemma}[thm]{Lemma}

\numberwithin{figure}{section}

\setlength{\hoffset}{0cm}
\setlength{\textwidth}{16cm}
\setlength{\voffset}{0cm}
\setlength{\textheight}{24cm}
\setlength{\oddsidemargin}{0cm}
\setlength{\evensidemargin}{0cm}\numberwithin{algorithm}{section}
\numberwithin{equation}{section}

\begin{document}

\begin{abstract}
This work introduces a geometric framework and a novel network architecture for creating correspondences between samples of different conditions. Under this formalism, the latent space is a fiber bundle stratified into a \textit{base space} encoding conditions, and a \textit{fiber space} encoding the variations within conditions. Furthermore, this latent space is endowed with a natural pull-back metric.
The correspondences between conditions are obtained by minimizing an energy functional, resulting in diffeomorphism flows between fibers.

We illustrate this approach using MNIST and Olivetti and benchmark its performances on the task of batch correction, which is the problem of integrating multiple biological datasets together.
\end{abstract}

\maketitle

\medskip

{\bf Keywords:} Auto-Encoders, Latent variables as a Riemannian fiber bundle, Effective computation of geodesics,  Representation learning, Correspondence learning.

\clearpage
\tableofcontents
\clearpage

\section{Introduction}
\label{section:introduction}

Most datasets can be naturally stratified by their meta-data and annotations. For example, biological datasets could be labelled by patient age, ethnicity and medical history. Bank transactions by gender, time periods, locations. Conceptually, these annotations are considered to be generative of the variations observed in the dataset. They are intuitively understood as containing \emph{explanatory factors of variation} that can be independently used to generate the dataset samples. Following this intuition, modelling therefore needs to account for sample differences by explicitly disentangling these factors of variation from the labels themselves. This disentanglement is critical for many practical applications such as: understanding the factors of variation within the dataset and understanding the correspondences between samples from different datasets of similar nature.

In the context of generative models with a latent space, this brings forth the following questions:
\begin{questions}
\label{questions}
\begin{itemize}
    \item[(a)] How to structure the latent space in order to separate conditions from factors of variation within a condition? 
    \item[(b)] How to quantitatively evaluate the closeness of two conditions and transport data points from one to another?
    \item[(c)] An intuitive definition of disentanglement states that, ideally, conditions should be independent from variations within a condition. How to enforce this desirable feature?
\end{itemize}
\end{questions}

In this work, we propose a geometric formalism that accounts for dependencies between conditions and latent variables with a supervised geometric disentanglement directly implemented at the level of the latent space and the learning. The natural idea is to take a Riemannian fiber bundle $M$ for latent space. That is to say that $M$ is stratified into a base space $B$, encoding conditions, and a fiber space $F$, encoding variations within conditions. Within this formalism samples are encoded using two coordinates $\left( f, b \right) \in B \times F$. This answers question (a) among Questions \ref{questions}, at least a surface level. Much more will be said throughout the paper.

More importantly, using a natural Riemannian structure on $M$, this formalism is made quantitative and allows us to compute correspondences between conditions. We answer question (b) among Questions \ref{questions} by reducing the problem of translating samples between two conditions to finding the geodesics (shortest paths) in $M$ linking their corresponding fibers. We achieve this by minimizing an energy functional, which allow us to calculate diffeomorphism flows between fibers (see a conceptual sketch in Fig. \ref{fig:concept}). 

\begin{figure}[htp!]
\begin{center}
\includegraphics[width=0.6\textwidth]{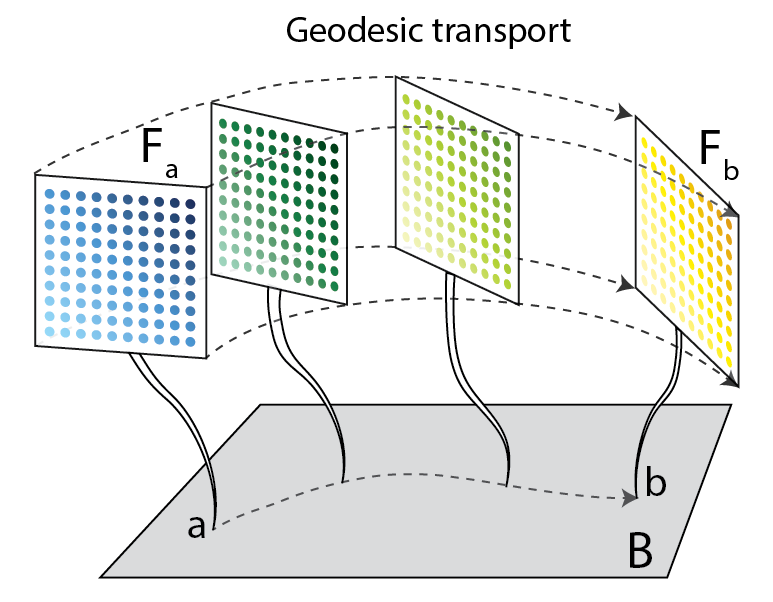}
\caption{Conceptual sketch of the proposed method. We stratify the learned latent space $M$ into a base space $B$ and a fiber space $F$. Under this representation, samples can be formally transported form $F_a$ to $F_b$, and geodesic interpolations between latent spaces $F_a$ and $F_b$ can be generated.}
\label{fig:concept}
\end{center}
\end{figure}

Finally, to answer question (c) among Questions \ref{questions}, we perform an explicit geometric disentanglement. It is baked into our neural network architecture which constructs the latent space while respecting the formalism.

\subsection{Literature review}
Works on disentanglement have concentrated on implicitly disentangling these factors of variation using unsupervised learning approaches \cite{SCAN, Hauberg, BetaVAE, FactorVAE, BetaTCVAE, DIPVAE, FAIRVAE}. However, recently it was demonstrated that disentanglement cannot be achieved without some from of inductive biases at both the level of the learning algorithm and the dataset \cite{locatello2019challenging}.

Current conditioned methods (methods that mix supervised labelling with unsupervised learning) often fail to consider or neglect the disentanglement between labels and factors of variations. In fact, the distribution of a variable $x$ stratified by a condition $c$ is often modelled using latent variables $z$ as $p(x| z, c)$ without any considerations for the dependencies between $z$ and $c$. For example, variational approaches such as Conditional Variational Auto-Encoders (CVAEs) \cite{CVAE, CVAE2, TRVAE} typically use bi-modal decoders that take both the latent variables and the condition encodings but do not account for duplication in information between them. Similar architectures are used for Conditional Generative Adversarial Networks (CGAN) \cite{CGAN, SSGAN, CYCLE_GAN} and hybrid methods mixing GANs and VAE \cite{AAN, VAE_GAN}. The recent paper \cite{Hauberg} on the other hand explores models with explicit disentanglement methods in a VAE framework. The model is however not conditioned and relies instead on a ST-Layer to learn deformations of latent space. 

\subsection{Contributions and structure of the paper}

Let us now move to a technical description of the paper's contributions and how their are structured.

\medskip

{\bf Section \ref{section:theory}: Theoretical contribution.} In that section, the idea presented in Fig. \ref{fig:concept} is put on a solid mathematical footing. This leads to our Main Theorem \ref{thm:main}, which establishes existence and generic uniqueness of the desired geodesics. Its proof in Subsection \ref{subsection:proof} can be skipped on a first read. Let us only mention that the existence follows the usual direct method in calculus of variations while generic uniqueness relies on the fine properties of the (generalized) cut-locus in Riemannian geometry.

Endowing latent spaces with a natural Riemannian geometry has already been explored in machine learning, especially in shape analysis. For a modern review, see \cite{PSF19} and the references therein. 
Our method for computing similarities bares some similarities with the work of \cite{GSFG12}. There, parallel transport is used in the context of vector bundles to transport features between datasets. Their Fig. 1 is in the spirit of what we propose. Nevertheless, our framework is fundamentally different as we do not assume that fibers carry a linear structure and we compute non-linear correspondences. The numerics of geodesics in a modern machine learning framework are explained in Subsection \ref{subsection:geodesics_numerics}.

\medskip

{\bf Section \ref{section:network}: The FAE architecture and explicit geometric disentanglement.} 
We implemented our geometric framework using a non-variational auto-encoder architecture, deemed Fibered Auto-Encoder (FAE). A descriptive sketch of the architecture is given in Fig. \ref{fig:nnArchitecture}. In fact, the FAE architecture is based on the preliminary work of \cite{HGN} and brings major improvements to it. One such improvement is the use of the standard method Domain Adversarial Training \cite{DANN1, DANN2} in order to explicitly disentangle the base space and the fiber space. We enforce an informational disentanglement which finalizes the geometric fiber structure of the latent space.

The method's success is attested by Fig. \ref{fig:MNIST_no_ap}, making fibers for different conditions indistinguishable: the condition-specific information within fibers does not allow to recover the underlying condition.

In principle, the approach can be adapted to other frameworks such as VAEs or GANs. We chose to focus on classical auto-encoders for their simplicity and versatility.

\medskip

{\bf Section \ref{section:Applications}: Applications.}
The effectiveness of both FAEs for stratified data modelling, and of the transport method are illustrated by experimenting on MNIST \cite{MNIST} and Olivetti \cite{SKLEARN} datasets. The MNIST dataset is particularly useful for visualizing geodesics and the obtained correspondences. For examples Figures \ref{fig:MNIST_diffeo}, \ref{fig:mnist_geodesics3d}, \ref{fig:MNIST_geodesics} and \ref{fig:mnist_diffeo_10} undoubtedly demonstrate that the natural geodesics in a latent space are {\it not} straight curves. They visually demonstrate that the approach is sound and that the Riemannian formalism is essentially unavoidable.

Finally, we give a real-life application  by quantifying the quality of transport by applying the proposed framework to the batch correction problem. Batch correction refers to the problem of multiple dataset integration in biology. It is a major hurdle preventing researchers from leveraging the power of previously published datasets. We use our approach to translate samples between datasets and provided benchmarks against state of the art methods, including other neural network approaches, and report overall on par, or better results.

\section{Geometric framework}
\label{section:theory}
We start by explaining how the problem of learning correspondences can formalized geometrically as geodesically transporting latent variables in a fiber bundle. The only assumption is that there exists a smooth map $\Psi_\theta$ such that for any coordinates $(f,b) \in F \times B$, one can generate samples, via:
$$ \widehat{X} = \Psi_\theta( f,b ) \ .$$
This function will be later constructed thanks to a neural network. To be exact, $\Psi_\theta$ will be the decoder part in our FAE architecture. And more generally, one may consider the corresponding map in any generative process.

\subsection{Fiber bundles}
Morally, a fiber bundle $M$ is a topological space locally stratified along two directions (see Fig. \ref{fig:fiberBundle}):
\begin{itemize}
\item The horizontal direction is tangent to a topological space called the base manifold $B$ (in our case encoding conditions).
\item The vertical direction is tangent to a topological space called the fiber manifold $F$ (in our case encoding variations within conditions).
\end{itemize}


\begin{figure}
\begin{center}
\begin{tikzpicture}[>=stealth]
\draw(3,0)--(5,0)node[midway,below]{Base manifold $B$}arc(0:70:4)--(90:3)arc(90:0:3)--cycle;


\begin{scope}[bend right]
\foreach \i[count=\x] in {10,30,50,70}
{\node(a\x)[circle,fill,inner sep=1pt]at (\i:3.4){};
\draw(a\x)to(a\x|-1,4);}

\foreach \i[count=\x] in {7,26,46,49}
{\node(b\x)[circle,fill,inner sep=1pt]at (\i:4){};
\draw(b\x)to(b\x|-1,4);}

\foreach \i[count=\x] in {6,26,46,66}
{\node(c\x)[circle,fill,inner sep=1pt]at (\i:3.6){};
\draw(c\x)to(c\x|-0,4);}

\path(c1)to coordinate[near start](d)(c1|-0,4);
\path(b1)to coordinate[near start](e)(b1|-0,4);
\end{scope}

\draw[<-](d)--+(0.5,-0.5)node(dl)[right]{Fibers $\approx F$};
\draw[<-](e)--+(0.25,-0.5)node[right]{};
\end{tikzpicture}
\caption{Illustrative drawing of fiber bundle $B$ and standard fiber $F$}
\label{fig:fiberBundle}
\end{center}
\end{figure}

For example, representing the latent space of MNIST as a fiber bundle, the base manifold $B$ will be an Euclidean space where image labels are embedded. The fiber $F$ will be the latent space encoding digit variability such as boldness and inclination.

Formally, $M$ is a fiber bundle with base space $B$ and fiber $F$ when, $M$ is a topological space such that there exists a projection map $\pi: M \rightarrow B$ such that for every open set $U \subset B$, $\pi^{-1}(U)$ is diffeomorphic to $U \times F$ \cite{KSM99}.
The fiber sitting above $b \in B$ is denoted $F_b := \pi^{-1}(\{b\})$.

Here we are only be concerned with the simplest fiber bundle $M = B \times F$ where $\pi$ is the projection on the first coordinate. This space is known as the trivial fiber bundle, as it does not contain any global topological information. 

Nevertheless, in general, given $a \neq b \in B$, one should not put in correspondence $F_a = \{a\} \times F$ and $F_b = \{b\} \times F$ using the identity map on the second coordinate. We refer to this operation as \textbf{naive transport}, which is a valid approximation only under certain circumstances. For example, in the case of MNIST, this would amount to considering the spaces encoding the latent properties of $0$ and $1$ as strictly identical. A priori, spaces of latent properties do change upon changing classes. 

We formalize the problem of finding the best correspondence between fibers as a \textbf{geodesic transport} problem, which requires the Riemannian point of view developed in the next section \ref{subsection:geodesics}. 

\subsection{Geodesic transport}
\label{subsection:geodesics}
\begin{figure}
\begin{center}
\begin{tikzpicture}[scale=0.9, >=stealth]
\draw(1,0)--(8,0)node[midway,below]{Base $B$}arc(0:70:4)--(90:1)arc(90:0:1)--cycle;

\node(b1)[circle,fill,inner sep=1pt]at (2,1.4){};
\draw[bend right](b1)node[below]{$b_1$} to node(v)[left, pos=0.5]{$f_1$} (b1|-1,4) node[right]{Fiber $F_1$};

\node(b2)[circle,fill,inner sep=1pt]at (6,0.5){};
\draw[bend right](b2)node[right]{$b_2$} to[out=0] node(w)[right, pos=0.33]{$f_2$} (b2|-1,4) node[right]{Fiber $F_2$};


\draw[->, bend right](v) to[out=60, in=180] node[midway, above]{$\gamma$} (w);
\end{tikzpicture}
\caption{Illustration of geodesic transport with curve $\gamma$ from the fiber $F_1$ to $F_2$.}
\label{fig:transport}
\end{center}
\end{figure}

In this section, we assume that the latent space $M$ is a general fiber bundle with standard fiber $F$ and base space $B$. The natural projection is written $\pi: M \rightarrow B$. Also we adopt the extrinsic point of view by assuming that $M$ is a submanifold of $\R^L$ for $L \in \N$ large enough\footnote{For the sake of simplicity, we keep the extrinsic point of view throughout the paper, instead of the modern intrinsic point view.}. By abuse of notation, elements in $M$ are written $(f,b) = (f, \pi(f))$ as if we are dealing with a product space. The target space $\Xc$ is Euclidean with norm denoted $\| \cdot \|_2$. It has a constant metric given by the usual scalar product $\langle \cdot, \cdot \rangle $. Our goal is to construct the shortest curves between fibers in $M$. This is illustrated in Fig. \ref{fig:transport}. Recall that we assume the existence of a generating process 
\begin{align}
\label{def:targ_space}
\Psi_\theta: & \ M \rightarrow \left( \Xc, \|\cdot \|_2 \right) \ .
\end{align}
The target space $\Xc$ is taken as Euclidean and endowed with the usual metric $\|\cdot \|_2$, although other metrics could be considered. The latent space $M$ however should not be seen as Euclidean. Since the Jacobian $\nabla \Psi_\theta$ measures the sensitivity with respect to latent variables, it is natural to use it to define a metric on $M$ where gradients are lower along shorter curves.

A strict mathematical reformulation consists in saying that the latent space $M$ is given the structure of a Riemannian manifold thanks to a metric tensor $g$, by pulling back the metric structure from the generating map $\Psi_\theta: (M, g) \rightarrow \left( \Xc, \|\cdot\|_2 \right)$. See \cite{GHL90}[$\mathsection 1.106$] for a formal definition of the pull-back operation. Thus the metric for the latent space $M$ respects the generative process rather than being the unnatural Euclidean metric on $M$.

\begin{lemma}
The pullback metric $g := \Psi_\theta^* \langle \cdot, \cdot \rangle$ at a point $p \in M$ identifies to the positive definite matrix
$$ g(p) = \nabla \Psi_\theta(p)^* \nabla \Psi_\theta(p) \ .$$
\end{lemma}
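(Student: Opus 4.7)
The plan is to unfold the definition of the pullback metric and identify it, in matrix form, with the Gram matrix of the Jacobian. First I would recall that, by definition, for any point $p \in M$ and any pair of tangent vectors $u,v \in T_p M$, the pullback metric is
$$g_p(u,v) \;:=\; \bigl(\Psi_\theta^* \langle \cdot,\cdot\rangle\bigr)_p(u,v) \;=\; \bigl\langle d\Psi_\theta(p)\, u,\; d\Psi_\theta(p)\, v \bigr\rangle.$$
Since $M$ is viewed extrinsically as a submanifold of $\R^L$, tangent vectors are ordinary column vectors and the differential $d\Psi_\theta(p)$ is represented by the Jacobian matrix $\nabla \Psi_\theta(p)$. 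The standard identity $\langle A u, A v\rangle = u^* A^* A v$ then yields the claimed matrix expression $g(p) = \nabla \Psi_\theta(p)^* \nabla \Psi_\theta(p)$.

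Next I would verify the metric properties. Symmetry is immediate since $(A^*A)^* = A^*A$. Positive semi-definiteness follows from $v^* (A^*A) v = \|Av\|_2^2 \geq 0$. The remaining subtlety, which I would flag as the only real point of the lemma, is strict positive definiteness: $v^* \nabla\Psi_\theta(p)^*\nabla\Psi_\theta(p)\, v = 0$ implies $\nabla \Psi_\theta(p)\, v = 0$, so the statement requires $\nabla \Psi_\theta(p)$ to be injective, i.e.\ $\Psi_\theta$ to be an immersion at $p$.

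The main (and only) obstacle is thus not computational but conceptual: one must assume (or argue) that $\Psi_\theta$ is an immersion on $M$, otherwise the pullback is merely a semi-Riemannian (degenerate) tensor. In the setting of the paper this is natural — $M$ is a low-dimensional latent space and $\Psi_\theta$ maps it into a much higher-dimensional target $\Xc$, so generically (and for a well-trained decoder) $\nabla \Psi_\theta(p)$ has full column rank. I would make this assumption explicit and then the lemma reduces, as above, to a one-line computation unpacking the pullback definition. Finally, I would briefly remark that $g(p)$ inherits smoothness in $p$ from $\Psi_\theta \in C^\infty$, so that $(M,g)$ is indeed a bona fide Riemannian manifold as needed for the geodesic formulation of Section \ref{subsection:geodesics}.
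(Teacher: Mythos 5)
Your proposal is correct and follows essentially the same route as the paper: unfold the definition of the pullback on a pair of tangent vectors, use that $\Xc$ is Euclidean, and read off the Gram matrix $\nabla\Psi_\theta(p)^*\nabla\Psi_\theta(p)$. Your additional remark that strict positive definiteness requires $\nabla\Psi_\theta(p)$ to be injective is well taken --- the paper's proof silently asserts definiteness and only justifies the full-rank condition later, via the genericity discussion surrounding hypothesis (H).
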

\begin{proof}
Let $\langle \cdot, \cdot \rangle_p^M$ and $\langle \cdot, \cdot \rangle_{\Psi_\theta(p)}^\Xc$ be the metrics at $p \in M$ and $\Psi_\theta(p) \in \Xc$. For any $p \in M$ and any two tangent vectors $(x,y) \in T_p M \times T_p M$, we have by definition of pullback for differential forms:
\begin{align*}
    \langle x, y \rangle_p^M
= & \langle \nabla \Psi_\theta(p) x,  \nabla \Psi_\theta(p) y \rangle_{\Psi_\theta(p)}^\Xc \\
= & \langle x, \nabla \Psi_\theta(p)^* \nabla \Psi_\theta(p) y \rangle \ ,
\end{align*}
where we used in the last line that $\Xc$ is Euclidean. It is indeed of the form  $\langle x, g(p) y \rangle \ .$
\end{proof}

Let $n = \dim B$ and $m = \dim F$. Given a point $p \in M$, we make the identification of the tangent space $T_p M \simeq \R^{m+n}$. Also the metric tensor is nothing but the datum of a positive definite matrix $g(p) \in M_{m+n}(\R)$ depending smoothly on the point $p \in M$. If $\langle \cdot, \cdot \rangle$ is the scalar product on $\R^{m+n}$, we obtain the Riemannian scalar product $\langle \cdot, \ g(p) \cdot \rangle$ on $T_p M \times T_p M$ for every point. For a curve $\gamma: [0,1] \rightarrow M$, we write $\gamma_t$ for the position and $\dot{\gamma}_t \in T_{\gamma_t} M$ for the speed vector at time $t$.

By definition, geodesics $\gamma$ minimize length
$$ L(\gamma)
 = \int_0^1 dt \sqrt{
   \langle \dot{\gamma}_t, g(\gamma_t) \dot{\gamma}_t \rangle
   } \ , 
$$
and upon reparametrization, it is well-known that \cite{GHL90}[$\mathsection 2.96$] geodesics also minimize the energy functional whose general expression is:
    \begin{align}
    \label{def:riemannianEnergy}
    \Ec(\gamma) := & \ \half \int_0^1 dt \ \langle \dot{\gamma}_t, 
    g(\gamma_t)
    \dot{\gamma}_t \rangle \ .
    \end{align}
The energy functional $\Ec$ has better convexity properties compared to the length $L$ and thus its minimization is better behaved numerically. 
In our case, we choose to define:
  \begin{align}
  \label{def:energy}
  \Ec(\gamma) := 
& \ \half \int_0^1 dt \ \left| \nabla \Psi_\theta \left( \gamma_t \right) \cdot \dot{\gamma}_t \right|^2 \ ,
  \end{align}
  which is of the same form as Eq. \eqref{def:riemannianEnergy} with
  \begin{align}
     \label{def:gTensor}
     g(p) = & \nabla \Psi_\theta \left( p \right)^* \nabla \Psi_\theta \left( p \right) \ .
  \end{align}
Here $*$ denotes the transposition applied to the Jacobian matrix $\nabla \Psi_\theta \left( m \right)$ . Upon comparing Eq. \eqref{def:riemannianEnergy} and Eq. \eqref{def:energy}, we see that minimizing the energy $\Ec(\gamma)$ with this metric tensor amounts exactly to choosing curves which are causing minimal changes in the output of the generator $\Psi_\theta$. Therefore, the role of neural network in the next section is not only to create a space of latent variables but also the geometry of a Riemannian manifold via the metric tensor \eqref{def:gTensor}.

In fact, considering the energy functional $\Ec$ for this metric amounts to the following equivalent formulation. First consider the energy functional on $\Xc$, whose geodesics are straight lines and which is given for any curve $c:[0,1] \rightarrow \Xc$ by:
$$ E(c) = \half \int_0^1 \| \dot{c}_t \|_2^2  \ .$$
Then restrict to curves the form $c_t = \Psi_\theta\left( \gamma_t \right)$ where $\gamma$ is a curve on $M$. This naturally yields:
\begin{align}
\label{eq:energy}
   \Ec(\gamma)
:= & \ \half \int_0^1 \| \nabla \Psi_\theta(\gamma_t) \dot{\gamma}_t \|^2_2 \\
 = & \ \half \int_0^1 \langle \dot{\gamma}_t, g(\gamma_t) \dot{\gamma}_t \rangle 
\nonumber
\ ,
\end{align}
which was our working definition. Basically, we are restricting the energy functional to the very small subspace $\textrm{Im} \Psi_\theta$ of admissible values generated by the neural network. Notice that after such a restriction, geodesics have no reason to be straight lines anymore.

In the end, the problem of creating correspondences is formalized as the following constrained minimization problem. Given a $(f_1, b_1) \in M$ and a point $b_2 \in B$, the best corresponding point $f_2 \in F_{b_2}$ is given by the endpoint of the curve $\gamma^*$ satisfying the constrained minimization problem:
  \begin{align}
  \label{def:variationalProblem}
    \gamma^{*}
:= & \underset{\substack{
      \gamma \in H^1\left( [0,1]; \left(M,g\right) \right), \\
      \gamma_{t=0} = (f_1, b_1), \\
      \gamma_{t=1} \in F_{b_2}
      }}
      {\textrm{Argmin}} \Ec(\gamma) \ ,
  \end{align}
where the Sobolev space $H^1\left( [0,1];  \left(M,g\right) \right)$ is the largest space on which energy is finite:
\begin{align}
\label{def:sobolev_on_M}
   H^1\left( [0,1]; \left(M,g\right) \right) 
:= & 
 \left\{ \gamma: [0,1] \rightarrow (M,g) \textrm{ measurable } \ | \ \Ec\left(\gamma\right) < \infty \right\} 
 \ .
\end{align}

The following theorem states that the problem \eqref{def:variationalProblem} is often well-posed:
\begin{thm}[Main theorem]
\label{thm:main}
Assume that $(M,g)$ is path-connected, complete, boundary free and that there exists a universal constant $C>0$ such that for all $p \in M$ and $v \in T_p M$:
\begin{align}
\tag{\bf H}
\label{comparison_hypothesis_appendix}
   \frac{1}{C} \|v\|_2^2 \ 
   \leq \ 
   \langle v, g(p) v \rangle \ 
   \leq \ 
   C \|v\|_2^2 \ .
\end{align}
Under such hypotheses, minimizing geodesics always exist, are smooth, have constant speed and are generically unique. "Generically" means for all $(f_1, b_1)$ outside a closed set of zero Lebesgue measure, the cut locus $\textrm{Cut}(F_{b_2})$, while $b_2$ is fixed.

Furthermore, the correspondence
\begin{align}
\label{def:correspondence}
    &
   \begin{array}{cccc}
   \Cc_{b_1}^{b_2}:
       & F_{b_1} \cap \textrm{Cut}(F_{b_2})^c & \longrightarrow & F_{b_2} \\
       & (f_1,b_1) = \gamma_{t=0}^*           & \mapsto         & (f_2,b_2) = \gamma_{t=1}^* \\
   \end{array}
\end{align}
is a well-defined local diffeomorphism between fibers, which tends to the identity map on $F$ as $b_1$ converges to $b_2$.
\end{thm}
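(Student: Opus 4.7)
My plan is to follow the direct method of the calculus of variations for existence, then upgrade regularity via the Euler-Lagrange equation, and finally appeal to the theory of the cut locus of a closed submanifold for generic uniqueness and the local diffeomorphism property.

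For existence, hypothesis \eqref{comparison_hypothesis_appendix} guarantees that $g$ is uniformly equivalent to the Euclidean metric on $\R^{m+n}$, so that $H^1\bigl([0,1];(M,g)\bigr)$ coincides with the ambient Sobolev space of $M$-valued curves and the energy $\Ec$ is comparable to the standard Dirichlet energy. The admissible set $\{\gamma \in H^1 : \gamma_0 = (f_1,b_1),\ \gamma_1 \in F_{b_2}\}$ is nonempty by path-connectedness, and closed under weak $H^1$ convergence because $F_{b_2} = \pi^{-1}(\{b_2\})$ is closed in $M$ and weak $H^1$ convergence implies uniform convergence on $[0,1]$. A minimizing sequence is bounded in $H^1$ and so extracts to a weakly convergent subsequence; weak lower semicontinuity of the quadratic functional $\gamma \mapsto \tfrac{1}{2}\int \langle \dot\gamma_t, g(\gamma_t)\dot\gamma_t\rangle\,dt$ (convex in $\dot\gamma$ with continuous coefficient $p \mapsto g(p)$) then delivers a minimizer $\gamma^{*}$.

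For regularity, constant speed and orthogonality I would write the Euler-Lagrange equation of $\Ec$. Interior variations force $\gamma^{*}$ to solve the Riemannian geodesic equation $\nabla_{\dot\gamma^{*}}\dot\gamma^{*} = 0$, whose Christoffel symbols are smooth because $g$ inherits smoothness from $\Psi_\theta$; a standard ODE bootstrap then promotes $\gamma^{*}$ from $H^1$ to $C^\infty$. Constant speed is immediate from $\tfrac{d}{dt}\langle \dot\gamma^{*}, g(\gamma^{*})\dot\gamma^{*}\rangle = 0$. Free-endpoint variations tangent to $F_{b_2}$ at $t=1$ additionally enforce the transversality condition $\dot\gamma^{*}_1 \perp_g T_{\gamma^{*}_1} F_{b_2}$, which is the structural input for the diffeomorphism statement.

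For generic uniqueness and the local diffeomorphism, the key object is the normal exponential map $\exp^{\perp}: \nu F_{b_2} \to M$ of the submanifold $F_{b_2}$, defined globally because $(M,g)$ is assumed complete. The cut locus $\textrm{Cut}(F_{b_2})$ is by definition the set of points beyond which the length-minimizing normal geodesic to $F_{b_2}$ ceases to be unique or minimizing; classical results in Riemannian geometry for the cut locus of a closed submanifold show that $\textrm{Cut}(F_{b_2})$ is closed and has Lebesgue measure zero in $M$. On the complement, $\exp^{\perp}$ restricts to a diffeomorphism, and $\Cc_{b_1}^{b_2}$ is realized as the composition of $(\exp^{\perp})^{-1}$ with the bundle projection $\nu F_{b_2} \to F_{b_2}$, restricted to $F_{b_1}$. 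A Jacobi field calculation combined with the implicit function theorem shows that, where $F_{b_1}$ meets this tubular neighborhood transversally, $\Cc_{b_1}^{b_2}$ is a local diffeomorphism; continuous dependence of $\exp^{\perp}$ on initial data handles the limit $b_1 \to b_2$, where the unique minimizer degenerates to the trivial geodesic and $\Cc_{b_1}^{b_2} \to \id_F$. The main obstacle I anticipate is this last step: importing the nontrivial measure-zero property of the cut locus of a closed submanifold (rather than of a point), and converting it into the local diffeomorphism property via Jacobi field analysis. Existence, smoothness and constant speed are by contrast essentially textbook once \eqref{comparison_hypothesis_appendix} secures the equivalence of norms.
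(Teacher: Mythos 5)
Your proposal is correct and follows essentially the same route as the paper: the direct method in the calculus of variations for existence, the Euler--Lagrange equation for smoothness, constant speed and the normal-incidence condition at $F_{b_2}$, and the closedness and measure-zero property of the cut locus of the closed submanifold $F_{b_2}$ for generic uniqueness and the local diffeomorphism. The only cosmetic differences are that you invoke weak $H^1$ compactness and weak lower semicontinuity where the paper uses Arzel\`a--Ascoli plus lower semicontinuity under uniform convergence, and you phrase the final step via the normal exponential map where the paper uses an equivalent tubular flow argument on the unit normal bundle.
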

The complete proof is available in the next subsection \ref{subsection:proof} and can be skipped on a first read. Subsection \ref{subsection:geodesics_numerics} shows how to numerically implement the minimization in a modern machine learning framework, and discusses the relevance of adding a regularization cost for high dimensional manifolds $M$. In applications, we will consider $B=\R^n$, $F=[-1,1]^m$ and ignore boundary effects. For now, let us discuss the setting and hypotheses, while setting a few notations.

\medskip

{\bf On the manifold $M$:} Assuming that $M$ is a path-connected and complete metric space is standard practice, as it avoids many geometric pathologies - see \cite{GHL90}[Corollary $\mathsection 2.105$]. Also we can assume general fiber bundles. Notice that unlike the setting of vector bundles or $G$-principal bundles, our setting has very little equivariance: there is no natural group action preserving the metric on fibers, and there is no natural quotient metric on $B$. As such, the convergence of $b_1 \rightarrow b_2$ so that $\Cc_{b_1}^{b_2} \rightarrow \id$ needs to be understood in the Euclidean topology of $M \subset \R^L$.

Regarding the setting of manifolds without boundaries, our arguments for existence and smoothness of geodesics carry verbatim to the case of manifolds with boundaries. However, the (already delicate) arguments proving generic uniqueness fail in that case. This is why we made the choice stating a complete theorem with $M$ having no boundary and we ignored boundary effects in applications.

\medskip

{\bf On the hypothesis (H):} Rather than a complete justification, let us explain why this hypothesis is acceptable in practice.

From the point of view of machine learning, the upper bound on the metric tensor 
\begin{align}
\label{eq:g_tensor}
g(p) = & \ \nabla \Psi_\theta(p)^* \nabla \Psi_\theta(p)     
\end{align}
is guaranteed on compact sets as long as $\Psi_\theta$ is smooth. The lower bound is more delicate. A necessary condition is the non-vanishing gradients (non-saturation) which we already guaranteed thanks to the choice of non-linearities detailed in subsection 3.1. 
A sufficient yet reasonable condition is to have $g(p)$ with full rank and $g(p)^{-1}$ uniformly bounded. We have to resort to a genericity argument to justify why this is reasonable: since $\Psi_\theta: M \rightarrow \Xc$ is a decoder with $\dim M$ much smaller than $\dim \Xc$,  then $\nabla \Psi_\theta(p)^* \nabla \Psi_\theta(p)$ has rank $\dim M$ for a generic Jacobian matrix $\nabla \Psi_\theta(p)$.

At the level of geometry, it ensures that the manifold's metric $g$ is comparable (topologically equivalent) to the Euclidean metric $\| \cdot \|_2$ on $\R^L$ which contains $M$. This yields the inclusion of Sobolev spaces:
\begin{align}
\label{eq:sobolev_comparison}
  H^1\left( [0,1] ; \left(M, g\right) \right) \ \subset \ 
& H^1\left( [0,1] ; \R^{L} \right) \ .
\end{align}
The norm of a function $\varphi \in H^1 := H^1\left( [0,1] ; \R^{L} \right)$ is:
\begin{align}
\label{def:H1_scalar_product}
\left\| \varphi \right\|_{H^1} & := \sqrt{ \|\varphi(0)\|_2^2 + \int_0^1 dt \ \|\varphi'(t)\|_2^2 } \ .
\end{align}

\subsection{Proof of Theorem \ref{thm:main}}
\label{subsection:proof}

Our proof details successively the existence of geodesics as minimizers of the energy functional, their smoothness, the fact that they have constant speed, the (generic) uniqueness and finally the local diffeomorphism property.

Although non-trivial, these arguments are classical for a mathematician seasoned in Riemannian geometry.

\subsubsection{Existence of minimizers to Eq. \texorpdfstring{\eqref{def:variationalProblem}}{}}
For this subsection, we follow the direct method in the calculus of variations \cite{DACOROGNA}. As $M$ is path connected, the minimization problem \eqref{def:variationalProblem} is over an non-empty set and there exists a minimizing sequence $\left( \gamma^{n} \ ; \ n \in \N \right)$ in $H^1\left( [0,1]; (M,g) \right)$. It satisfies:
\begin{align*}
\gamma^{n}_{t=0} & = (f_1, b_1) \ ,\\
\gamma^{n}_{t=1} & \in F_{b_2}  \ ,\\
\Ec\left( \gamma^{n} \right) & 
   \stackrel{n \rightarrow \infty}{ \longrightarrow}
   \underset{
   \substack{
      \gamma \in H^1\left( [0,1]; (M,g) \right), \\
      \gamma_{t=0} = (f_1, b_1), \\
      \gamma_{t=1} \in F_{b_2}
   }
   }{\inf} \ 
   \Ec\left( \gamma \right) \ .
\end{align*}
The following classical compactness argument allows to find a converging subsequence. First, notice that for all functions $\varphi \in H^1$ and $0 \leq t_1 \leq t_2 \leq 1$:
$$
     \| \varphi(t_1)-\varphi(t_2) \|_2
\leq \int_{t_1}^{t_2} dt \ \|\varphi'(t)\|_2 
\leq \sqrt{t_2-t_1} \ \|\varphi\|_{H^1} \ ,
$$
where we successively used the triangular inequality and the Cauchy-Schwarz inequality. Because of the hypothesis {\bf (H)} which led to the inclusion \eqref{eq:sobolev_comparison}, the sequence $\left( \gamma^n , n \in \N \right)$ is bounded in $H^1$ and the previous inequality shows that $\left( \gamma^n , n \in \N \right)$ is equicontinuous and uniformly bounded. Thanks to the Ascoli-Arzela theorem, there exists a subsequence $n_k$ such that we have the convergence to a continuous path:
$$ \gamma^*
 = \lim_{k \rightarrow \infty} \gamma^{n_k}
\ .$$
This convergence is in the uniform topology and we not know yet that $\gamma^* \in H^1$. This is obtained by lower-semicontinuity of the norm function, so that:
$$ \| \gamma^* \|_{H^1}
   \leq 
   \liminf_{k \rightarrow \infty} \| \gamma^{n_k} \|_{H^1} < \infty \ .
$$
Using the same argument for the energy functional:
$$ \Ec\left( \gamma^* \right)
 \leq \liminf_{k \rightarrow \infty} \Ec\left( \gamma^{n_k} \right)
 = \inf_{\gamma} \ \Ec\left( \gamma \right)
 \ .
$$
In the end, $\gamma^*$ is indeed an energy minimizing geodesic.

\subsubsection{Smoothness of geodesics}
Let $\left( x_k \right)_{1 \leq k \leq \dim M}$ be a local chart for the manifold $M$. Now that we have the existence of a minimizing curves in $H^1$, let us write the Euler-Lagrange equation corresponding to the fact that minimizers are critical points of the energy. 

To that endeavor, consider a minimizer $\gamma$ with fixed initial position $\gamma_{t=0} = (f_1, b_1)$ and fixed final point $\gamma_{t=1} = (f_2, b_2)$. We will prove that it necessarily satisfies the Euler-Lagrange equation for all $t \in [0,1]$:
\begin{align}
\label{eq:EulerLagrange}
0 & = \half \left( \langle \dot{\gamma}_t, \frac{\partial g}{\partial x_k}(\gamma_t) \dot{\gamma}_t \rangle \right)_{k}
        -
        \frac{d}{d t} \left( g(\gamma_t) \dot{\gamma}_t \right) \ .
\end{align}
Thanks to the Leibniz rule and rearranging the equation, one obtains the second order ODE:
\begin{align}
\label{eq:EulerLagrangeODE}
    g(\gamma_t) \ddot{\gamma}_t 
= & \half \left( \langle \dot{\gamma}_t, \frac{\partial g}{\partial x_k}(\gamma_t) \dot{\gamma}_t \rangle \right)_{k}
    -
    \frac{d}{d t}\left( g(\gamma_t) \right) \dot{\gamma}_t \ .
\end{align}

For any smooth path $\delta: [0,1] \rightarrow M$ with $\delta_0 = \delta_1 = 0$, which we use as a perturbation, the fact that $\gamma$ is a critical point yields:
\begin{align*}
0 = & \lim_{\varepsilon \rightarrow 0} \frac{\Ec(\gamma + \varepsilon \delta) - \Ec(\gamma)}{\varepsilon} \\
  = & \lim_{\varepsilon \rightarrow 0} 
      \half \varepsilon^{-1}
      \int_0^1 dt \langle \dot{\gamma}_t + \varepsilon \dot{\delta}_t, \ 
                        g( \gamma_t ) \cdot (\dot{\gamma}_t + \varepsilon \dot{\delta}_t) \rangle
      \\
    & + \half \int_0^1 dt
      \langle \dot{\gamma}_t + \varepsilon \dot{\delta}_t, \ 
              \frac{g( \gamma_t + \varepsilon \delta_t)-g( \gamma_t )}{\varepsilon} \cdot 
              (\dot{\gamma}_t + \varepsilon \dot{\delta}_t) \rangle
      - \half \int_0^1 dt \varepsilon^{-1} \langle \dot{\gamma}_t, \ 
                        g( \gamma_t ) \cdot \dot{\gamma}_t \rangle
      \\
  = & \int_0^1 dt
      \lim_{\varepsilon \rightarrow 0} 
      \half \langle \dot{\gamma}_t, \ \frac{g( \gamma_t + \varepsilon \delta_t)-g( \gamma_t )}{\varepsilon} \cdot \dot{\gamma}_t \rangle
    + \int_0^1 dt
      \langle \dot{\gamma}_t, \ g( \gamma_t ) \cdot \dot{\delta}_t \rangle
      \\
  = & \int_0^1 dt \left(
      \half \sum_k \left(\delta_t\right)_k \langle \dot{\gamma}_t,  \frac{\partial g}{\partial x_k}(\gamma_t) \dot{\gamma}_t \rangle
      + \langle \dot{\gamma}_t, g(\gamma_t) \dot{\delta}_t \rangle
      \right) \ .
\end{align*}
The above expression assumes that the local chart $\left( x_k \right)_{k}$ is valid on the entire path. However, this is easily dealt with by taking perturbations $\delta$ supported on local charts. As such, without loss of generality, we assume for the rest of the argument that the chart is global. Upon performing an integration by parts on the second integral, we obtain:
\begin{align*}
0 = & \int_0^1 dt \left(
      \half \sum_k (\delta_t)_k \langle \dot{\gamma}_t,  \frac{\partial g}{\partial x_k}(\gamma_t) \dot{\gamma}_t \rangle \right) \\
    & + \ \left[ \langle \dot{\gamma}_t, g(\gamma_t) \delta_t \rangle \right]_0^1 
      - \ 
      \int_0^1 dt \left(
      \sum_k (\delta_t)_k \ \frac{d}{d t}\left( g(\gamma_t) \dot{\gamma}_t \right)_k \right)  \ .
\end{align*}
Since $\delta_0 = \delta_1 = 0$, the boundary terms $\left[ \langle \dot{\gamma}_t, g(\gamma_t) \delta_t \rangle \right]_0^1$ vanish. For all smooth $\delta: [0,1] \rightarrow M$, we have:
\begin{align}
\label{ref:zeroIntegral}
0  = &
      \sum_k
      \int_0^1 dt \ (\delta_t)_k \left(
      \half \langle \dot{\gamma}_t,  \frac{\partial g}{\partial x_k}(\gamma_t) \dot{\gamma}_t \rangle
      - \frac{d}{d t}\left( g(\gamma_t) \dot{\gamma}_t \right)_k \right)
      \ .
\end{align}
By density of smooth functions in $L^2([0,1], M)$, we obtain:
$$
0  = \sum_k
     \int_0^1 dt \ \left|
     \half \langle \dot{\gamma}_t,  \frac{\partial g}{\partial x_k}(\gamma_t) \dot{\gamma}_t \rangle
     - 
     \frac{d}{d t}\left( g(\gamma_t) \dot{\gamma}_t \right)_k \right|^2 \ ,
$$
and the integrand must be zero. This yields indeed the Euler-Lagrange equations \eqref{eq:EulerLagrange} coordinate-wise.

Multiplying \eqref{eq:EulerLagrangeODE} by $g(\gamma_t)^{-1}$, one sees that $\ddot \gamma$ is a smooth function of $\dot \gamma$ and $\gamma$, which is indeed a second order ODE. Because of the Cauchy-Lipschitz Theorem, everything is determined from the datum $\left( \gamma_{t=0}, \dot{\gamma}_{t=0} \right)$. Also because of the ODE's coefficients are smooth, the geodesic $\gamma$ is smooth as well.

\subsubsection{ Geodesics have constant speed}

\begin{lemma}
\label{lemma:constant_speed}
If $\gamma$ is a geodesic, then necessarily:
$$ 
\forall t \in [0,1], \ 
\half \langle \dot{\gamma_t}, g(\gamma_t) \dot{\gamma_t} \rangle = \Ec(\gamma) \ .$$
As a consequence, the map
$ t \mapsto \half \int_0^t dt \ \langle \dot{\gamma_t}, g(\gamma_t) \dot{\gamma_t} \rangle $
grows linearly.
\end{lemma}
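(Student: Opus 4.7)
The plan is to establish the stronger statement that the pointwise kinetic energy $e(t) := \half \langle \dot{\gamma}_t, g(\gamma_t) \dot{\gamma}_t \rangle$ is constant along any geodesic $\gamma$. Once this is known, the constant value is forced to equal $\Ec(\gamma)$ by the identity $\Ec(\gamma) = \int_0^1 e(t) \, dt$, which is just the definition \eqref{def:riemannianEnergy} of the energy. The linear growth of $t \mapsto \int_0^t \langle \dot{\gamma}_s, g(\gamma_s) \dot{\gamma}_s \rangle \, ds / 2$ on $[0,1]$ is then an immediate corollary.

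To show $e'(t) = 0$, I would differentiate using the product rule and the symmetry of $g(\gamma_t)$ to obtain
$$ e'(t) = \langle \ddot{\gamma}_t, g(\gamma_t) \dot{\gamma}_t \rangle + \half \langle \dot{\gamma}_t, \tfrac{d}{dt}[g(\gamma_t)] \dot{\gamma}_t \rangle.$$
Then I would contract the Euler-Lagrange ODE \eqref{eq:EulerLagrangeODE} with $\dot{\gamma}_t$. The chain rule gives $\tfrac{d}{dt}[g(\gamma_t)] = \sum_k \dot{\gamma}_t^k \tfrac{\partial g}{\partial x_k}(\gamma_t)$, so the first term on the right-hand side of \eqref{eq:EulerLagrangeODE}, once contracted with $\dot{\gamma}_t$, is exactly $\half \langle \dot{\gamma}_t, \tfrac{d}{dt}[g(\gamma_t)] \dot{\gamma}_t \rangle$, while the second term contributes $-\langle \dot{\gamma}_t, \tfrac{d}{dt}[g(\gamma_t)] \dot{\gamma}_t \rangle$. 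Hence $\langle \dot{\gamma}_t, g(\gamma_t) \ddot{\gamma}_t \rangle = -\half \langle \dot{\gamma}_t, \tfrac{d}{dt}[g(\gamma_t)] \dot{\gamma}_t \rangle$, and by symmetry of $g$ this equals $\langle \ddot{\gamma}_t, g(\gamma_t) \dot{\gamma}_t \rangle$. Substituting back into the formula for $e'(t)$, the two half-terms cancel and I obtain $e'(t) = 0$.

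There is no serious obstacle here: the argument is purely algebraic once the Euler-Lagrange equation of the previous subsection is available. The only point requiring care is bookkeeping of the factor $\half$ and of the symmetry of $g(\gamma_t)$, which together orchestrate the cancellation. Conceptually, this conservation of kinetic energy is the classical first integral of the geodesic flow, reflecting the autonomous ($t$-independent) nature of the Lagrangian $\half \langle v, g(p) v \rangle$.
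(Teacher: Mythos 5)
Your proof is correct and follows essentially the same route as the paper: contract the Euler--Lagrange equation with $\dot{\gamma}_t$, use the chain rule $\tfrac{d}{dt}[g(\gamma_t)] = \sum_k (\dot{\gamma}_t)_k \tfrac{\partial g}{\partial x_k}(\gamma_t)$ together with the Leibniz rule and the symmetry of $g$, and conclude that $\tfrac{d}{dt}\bigl(\half \langle \dot{\gamma}_t, g(\gamma_t) \dot{\gamma}_t \rangle\bigr) = 0$, whence the constant equals $\Ec(\gamma)$ by integration. The only cosmetic difference is that you start from the rearranged ODE form \eqref{eq:EulerLagrangeODE} rather than \eqref{eq:EulerLagrange}; the computation is identical.
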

\begin{proof}
Consider the Euler-Lagrange equation \eqref{eq:EulerLagrange} and form the scalar product with $\dot{\gamma_t}$. 
$$
0 = -\half \langle \dot{\gamma_t},
         \sum_k (\dot{\gamma}_t)_k \frac{\partial g(\gamma_t)}{\partial x_k} \dot{\gamma_t} \rangle
    + \langle \dot{\gamma_t}, \frac{d}{dt}\left( g(\gamma_t) \dot{\gamma_t} \right) \rangle \ .
$$
Using the chain rule, and then twice the Leibniz rule, we obtain:
\begin{align*}
0 = & \ -\half \langle \dot{\gamma_t}, \frac{d}{dt}\left( g(\gamma_t) \right) \dot{\gamma_t} \rangle
    + \langle \dot{\gamma_t}, \frac{d}{dt}\left( g(\gamma_t) \dot{\gamma_t} \right) \rangle \\
  = & \ \half \langle \dot{\gamma_t}, \frac{d}{dt}\left( g(\gamma_t) \right) \dot{\gamma_t} \rangle     + \langle \dot{\gamma_t}, g(\gamma_t) \ddot{\gamma_t} \rangle \\
  = & \ \half \frac{d}{dt} \langle \dot{\gamma_t}, g(\gamma_t) \dot{\gamma_t} \rangle \ .
\end{align*}
As such, $\half \langle \dot{\gamma_t}, g(\gamma_t) \dot{\gamma_t} \rangle$ is a constant, which is necessarily $\Ec(\gamma)$ by virtue of the expression \eqref{eq:energy}.
\end{proof}

An interesting feature of this property is that it can be used to test the convergence of the algorithm described in Subsection \ref{subsection:geodesics_numerics}.

\subsubsection{Generic uniqueness of geodesics between fibers} 
This is the delicate part. In general, it is well-known that minimizing geodesics are not always unique. The simple example to have in mind is the sphere, where opposite points have infinitely many geodesics joining them i.e great circles. However, pairs of opposite points have zero Lebesgue measure among all possible pairs. This example is in fact archetypal, and the goal of this section is to prove that in most cases of applications, any noise would play in our favor and guarantee unique geodesics.

Now, consider a minimizing geodesic $\gamma$, solution to problem \eqref{def:variationalProblem}. Necessarily, the geodesic hits the fiber $F_{b_2}$ along a normal speed. As such, the geodesic can be seen in reversed time by writing $\widetilde{\gamma}_t = \gamma_{1-t}$. Thus, we obtain a geodesic $\widetilde{\gamma}$ which starts at $(f_2, b_2)$ with an initial speed normal to $F_{b_2}$.  Clearly, $\gamma$ is uniquely determined if and only if $\widetilde{\gamma}$ is uniquely determined among minimal geodesics starting from $F_{b_2}$ with a normal unit speed and passing through $(f_1, b_1)$.

Recall that the the cut locus of a point, written $\textrm{Cut}(\{p\})$, is the set where geodesics starting from $p \in M$ are no longer minimal - see \cite{GHL90}[$\mathsection 2.112$]. By \cite{GHL90}[Scholium $\mathsection 3.78$], non-uniqueness of geodesics starting from $p$ implies belonging to $\textrm{Cut}(\{p\})$. In fact, as we shall see in the next paragraph, the notion of cut locus generalizes to the case of any closed submanifold $N$, which we denote by $\textrm{Cut}(N)$. In our case, we take $N=F_{b_2}$ and in the end, the proof requires the two following facts:
\begin{itemize}
    \item $\textrm{Cut}(N)$ is closed with vanishing Lebesgue measure.
    \item Uniqueness of geodesics $\widetilde{\gamma}$ is provided as soon as
$$ 
(f_1, b_1) \notin \textrm{Cut}(N) \ .
$$
\end{itemize}

We start with proving the first fact as it allows to properly define $\textrm{Cut}(N)$. Let $U N$ be the unit normal bundle of $N$ and 
$$ \pi_N: U N \rightarrow N \subset M$$
be the natural projection in this context. Following \cite{IT01}, a unit speed geodesic segment $\gamma: [0,a] \rightarrow M$ emanating from $N$ is called an $N$-segment if $t = d(N, \gamma_t)$ where $d$ is the natural distance on $(M,g)$. Also for $v \in U N$, define $\rho(v) \in \R_+ \cup \{\infty\}$ as the cut time:
$$ \rho(v) := \sup \left\{ t \geq 0 \ | \ \gamma_{|[0,t]} \textrm{ is an $N$-segment} \right\} \ .$$
This generalizes the time up to which a geodesic $c_v: \R_+ \rightarrow M$ starting at a point with a unit speed $v$ remains minimal. We obtain the definition of the cut locus with respect to a submanifold $N$ as \cite{IT01}[Definition 2.3]:
$$ 
  \textrm{Cut}(N)
  :=
  \left\{ \ 
  c_v\left( \rho(v) \right) \ | \ v \in N, \ \rho(v) < \infty 
  \ \right\} \ .
$$
Then \cite{IT01}[Theorem B] yields that $\rho$ is Lipschitz regular (where finite). Thus $\textrm{Cut}(N) \subset M$ is closed and has Hausdorff dimension at most $\dim M -1$, being the graph of a regular function. As such, the cut locus has indeed zero Lebesgue measure.

For the second fact, consider two minimal geodesics $\widetilde{\gamma}_1$ and $\widetilde{\gamma}_2$ exiting $N$ with a normal unit speed and passing through $(f_1, b_1)$. Necessarily, they cross each other transversally at $(f_1, b_1)$. One concludes that $\widetilde{\gamma}_1 \neq \widetilde{\gamma}_2$ implies $(f_1, b_1) \in \textrm{Cut}(N)$ via essentially the same argument as for a point - see the elegant pictorial proof in \cite{GHL90}[Fig 2.20].

\subsubsection{The local diffeomorphism property of \texorpdfstring{\eqref{def:correspondence}}{} via a flow} 
By uniqueness of the geodesics outside of the cut locus, the map $\Cc_{b_1}^{b_2}$ is well-defined. By identifying $F_{b_1}$ and $F_{b_2}$ to the standard fiber $F$, one can see $ \Cc_{b_1}^{b_2}$ as a map from an open set of $F$ to $F$. Furthermore, because of hypothesis {\bf (H)}, the Euclidean topology of $\R^L$ is comparable to $M$'s metric and the convergence to the identity map as $b_1 \rightarrow b_2$ is obvious.

Now, let us deal with the local diffeomorphism property. Since the cut locus $\textrm{Cut}(F_{b_2})$ is closed in $M$, so is $\textrm{Cut}(F_{b_2}) \cap F_{b_1}$ and there is a neighborhood $U$ of $(f_1, b_1) \in F_{b_1}$ where the map $\Cc_{b_1}^{b_2}$ is well-defined. By uniqueness of minimal geodesics starting from $F_{b_2}$ in a normal fashion, $\left( \Cc_{b_1}^{b_2} \right)_{|U}$ is injective. In the end, only smoothness and proving an open image needs to be addressed. This is better seen via the following tubular flow argument.  

Start by fixing the minimal geodesic $\gamma^*$ such that:
$$ \gamma_{t=0}^* = (f_1,b_1), \quad \gamma_{t=1}^* = (f_2,b_2) = p \ ,$$
all of the path $\gamma^*$ does not intersect the cut locus, by definition of $\textrm{Cut}(N)$. By the tubular neighborhood theorem, there exists a neighborhood of the curve $\gamma^*$ not intersecting the cut locus. In fact, notice that this neighborhood can be taken as an open tube made of flow lines. To that endeavor, let $-v \in T_p M$ be the unit speed of $\gamma^*$ at the end point, then consider a small neighborhood $\Nc$ of $(p,v) \in U N = U F_{b_2}$ and its image via the geodesic flow. By restricting further $\Nc$ if needed, we obtain a neighborhood of $\gamma^*$ of smooth geodesic flow lines. Restricting to the flow lines which intersect $F_{b_1}$ gives a (local) diffeomorphism flow.

\subsection{On the numerics of geodesics}
\label{subsection:geodesics_numerics}

A first possible route is to numerically solve the geodesic second order ODE \eqref{eq:EulerLagrange}, for instance via a standard Euler scheme. This method requires the computation of the metric tensor exactly, and its derivative. Equivalently, because of the expression \eqref{eq:g_tensor}, it requires the computation  of the Jacobian $\nabla \Psi_\theta$ and the Hessian $\nabla^2 \Psi_\theta$ along the geodesic path $\gamma$. However, the computation of higher order derivatives is much too costly in automatic differentiation packages such as pyTorch \cite{PYTORCH} and TensorFlow \cite{TF}.

In order to leverage the power of automatic differentiation, we need a method of order $1$. As such, we directly implement a gradient descent which minimizes of the energy functional along all paths:
\begin{align}
\label{eq:argmin_energy}
& \underset{ \gamma \in H^1\left( [0,1]; (M,g) \right) }{\textrm{Argmin}} \ \Ec(\gamma) \ .
\end{align}

Recall that in our case of application $M = F \times B$ with $B=\R^n$ and $F=[-1, 1]^m$. This yields, $M \subset \R^{L}$ with $L=n+m$. In order to solve the problem numerically, we use several standard approximations. We start by discarding the restriction to paths that are $M$-valued. Since $M$ has non-empty interior inside $\R^{m+n}$ and is convex, optimal paths on $\R^{m+n}$ have little chance of exiting $M$. The optimization is thus simplified by considering all the $\gamma \in H^1\left( [0,1], \R^{n+m} \right)$. Furthermore, the optimization problem is made finite-dimensional by expanding the path $\gamma \in H^1\left( [0,1], \R^{m+n} \right)$ on a Hilbert basis and keeping only finitely many coefficients. This is exactly the spirit of the Ritz-Galerkin method in numerical analysis. Finally, the energy functional is approximated by a Riemann sum with time step $\Delta t$. In doing so, we obtain the computable minimizing problem:
\begin{align}
\label{eq:approx_argmin_energy}
& \underset{ \gamma \in V_N }{\textrm{Argmin}} \ \Ec_{\Delta t}(\gamma) \ ,
\end{align}
where $V_N \subset H^1\left( [0,1], \R^{n+m} \right)$ is a finite dimensional space and 
$$ E_{\Delta t}(\gamma)
 = \sum_{k=1}^{1/\Delta t}
   \frac{
         \left| \Psi_\theta\left( \gamma_{(k+1)\Delta t} \right)
                -
                \Psi_\theta\left( \gamma_{k \Delta t} \right)
         \right|^2
        }
        {\Delta t} \  \ .
$$

Let us conclude this section by giving a precise description of the space $V_N$. We chose to build it from the Faber-Schauder system 
$$ \left( s_0, s_1, s_{j,k} \right)_{j \in \N, \ k \in \llbracket 0, 2^j-1 \rrbracket} \ ,$$
also commonly known as the basis of hat functions - see Eq.(2.3) from \cite{Triebel}[Chapter 2]. This is given for $t \in [0,1]$ by:
$$ s_{0}(t) = 1 \ ,$$
$$ s_{1}(t) = t \ ,$$
and for all $j \in \N$ and $k \in \llbracket 0, 2^j-1 \rrbracket$:
$$ s_{j,k}(t) = 2^{1+\frac{n}{2}} \int_0^t \psi_{n,k}(u) du$$
where $\psi_{j,k}$ is the Haar system defined from the Haar wavelet $\psi = \mathds{1}_{[0,\half]}-\mathds{1}_{[\half, 1]}$ as
$$ \psi_{j,k}(t) = 2^{\frac{n}{2}} \psi\left( 2^n t - k \right) \ .$$
The Haar system along with the constant function $1$ forms an orthonormal basis of $L^2\left( [0,1], \R \right)$. Therefore, given the norm definition in Eq.  \eqref{def:H1_scalar_product}, the Faber-Schauder system is an orthonormal basis of $H^1\left( [0,1], \R \right)$. Upon tensoring, we readily obtain a basis of $H^1\left( [0,1], \R^{n+m} \right)$ indexed by triple indices $(i,j,k)$:
$$ \forall t \in [0,1], \ 
   s^i_{j,k}(t) := \sum_{i=1}^{\dim M} s_{j,k}(t) \ e_i \ ,
$$
where $\left( e_i \right)_{1 \leq i \leq \dim M}$ is the canonical basis of $\R^{\dim M}$. In the end, we worked with
\begin{align*}
V_N := & \Span_\R\left\{ \ s^i_{j,k} \ | \
                         i \in \llbracket 1, \dim M \rrbracket, \right.\\
       & \quad \quad \quad \quad \quad \quad \left.
                         j \in \llbracket 0, N-1 \rrbracket, \ 
                         0 \leq k < 2^j \right\} \ ,
\end{align*}
and we have:
$$ \dim V_N = (2^N + 1) \dim M \ .$$

\section{Fibered Auto-Encoders}
\label{section:network}

A Fibered Auto-encoder (FAE) is essentially a standard auto-encoder whose latent space is stratified into base space $B$ and a fiber space $F$. However, to enforce the supervised geometric disentanglement between $B$ and $F$ and to improve the quality of generated samples, we have added auxiliary objectives to the reconstruction loss: (i) to enforce the geometric disentanglement between base and fiber spaces, (ii) to improve the quality of the reconstructions.

Throughout this work, we used only fully-connected layers. The architecture we propose can however accommodate any other types of layers. Throughout this section, we discuss Fig. \ref{fig:nnArchitecture} which serves as a comprehensive drawing.

\tikzset{%
  >={Latex[width=2mm,length=2mm]},
           input/.style = {rectangle, draw=black,
                           minimum height=1cm,
                           text centered, font=\sffamily},
            base/.style = {rectangle, rounded corners, draw=black,
                           minimum width=4cm, minimum height=1cm,
                           text centered, font=\sffamily},
       optimizer/.style = {circle, text centered, fill=red!30},
  activityStarts/.style = {base, fill=blue!30},
    activityRuns/.style = {base, fill=green!30},
         process/.style = {base, minimum width=2.5cm, fill=orange!15,
                           font=\ttfamily},
}
\usetikzlibrary{arrows.meta}
\begin{figure*}[htp!]
\scalebox{0.95}{

\begin{adjustbox}{width=\linewidth}
\begin{tikzpicture}[scale=0.1, node distance=1.5cm,
    every node/.style={fill=white, font=\sffamily}, align=center]
  \node (legendModule) [process, xshift=-4cm, yshift=5cm] {Network, Parameters};
  \node (legendModuleText) [right of=legendModule, xshift=1.6cm] {Modules};
  
  \node (legendOpt) [optimizer, xshift=-5cm, yshift=3.5cm] {Loss};
  \node (legendOptText) [right of=legendOpt, xshift=0.5cm] {Optimizers};
  
\node (legendVar) [input, xshift=-4cm, yshift=2cm] {Input / output};
  
  \node (start)             [xshift=-2cm]     							 {};

  \node (data)              [input, left  of=start]         {\raisebox{12pt}{$X=\ $}\includegraphics[scale=0.5]{./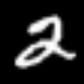}};
  \node (dataText)          [left  of=data, xshift=-1.0cm]  {Data in $\Xc$};
  
  \node (label)             [input, below of=data]           {$c = 2$};
  \node (labelText)         [left of=label, xshift=-0.25cm]  {Condition};
  
  \node (encoder)        [process, right of=data  , xshift=2.0cm]     {Encoder $\Phi_1$, $\theta_e$};
  \node (f)              [input, right of=encoder , xshift=1.5cm]     {$f \in F$};
    
  \node (embeddings)     [process, right of=label,    xshift=2.0cm]   {Embeddings $\Phi_2$, $\theta_m$};
  \node (b)              [input, right of=embeddings, xshift=1.5cm]   {$b \in B$};
  
  \node (ACBlock)        [process,   above of=f,      yshift=0.5cm]  {Classifier with GRL $\overline{\Upsilon}$, $\theta_{ac}$};
  \node (AC_out)         [input,     above of=ACBlock,yshift=0.5cm]  {$\left( \log \P\left( \widehat{c} = c_i \right) \right)_{1 \leq i \leq K}$};
  \node (ACLoss)         [optimizer, above of=AC_out, yshift=0.5cm]  {$\Lc^{xent}$ \\ Eq.\eqref{def:xent}};
  
  \node (concatData)     [right of=encoder, yshift=-0.75cm, xshift=2cm]           {};
  \node (latent)         [input, right of=encoder, yshift=-0.75cm, xshift=4cm]    {$\left(b, f\right) \in M$};
  \node (concatDataText) [above of=latent, yshift=-1cm]                           {Latent variable};

  \node (decoder)        [process, right of=latent, xshift=2cm]    {Decoder $\Psi$, $\theta_d$};
  
  \node (Xhat)           [input,  right of=decoder, xshift=2cm]   {\raisebox{12pt}{$\widehat{X}=\ $}\includegraphics[scale=0.5]{./figures/MnistDigit2.png}};
  
  \node (XhatLoss)       [optimizer, right of=Xhat, xshift=2cm]   {$\Lc^{mse}$ \\ Eq.\eqref{def:MSE}};
  
  \node (discriminator)  [process, above of=Xhat, xshift=-2.5cm, yshift=1cm]    {Discriminator $\Delta$, $\theta_\Delta$};
  \node (classifier)     [process, above of=Xhat, xshift= 2.5cm, yshift=1cm]    {Classifier $\Upsilon$, $\theta_c$};
  
  \node (D_out)          [input, above of=discriminator, yshift=0.5cm]    {$\log \P\left( \widehat{X} \  \textrm{is fake} \right)$};
  \node (C_out)          [input, above of=classifier   , yshift=0.5cm]    {$\left( \log \P\left( \widehat{c} = c_i \right) \right)_{1 \leq i \leq K}$};
  
  \node (Loss1)      [optimizer, above of=D_out, yshift=0.5cm] {$\Lc^{xent}$ \\ Eq.\eqref{def:xent}};
  \node (Loss2)      [optimizer, above of=C_out, yshift=0.5cm] {$\Lc^{xent}$ \\
  Eq.\eqref{def:xent}};

  \draw[->]     (data)       -- (encoder);
  \draw[->]     (encoder)    -- (f);
  \draw[->]     (f)          -- (ACBlock);
  \draw[->]     (ACBlock)    -- (AC_out);
  \draw[->]     (AC_out)     -- (ACLoss);

  \draw[->]     (f.east) .. controls +(10,0) and +(-10,0) ..  (latent.west);
  
  \draw[->]     (label)      -- (embeddings);
  \draw[->]     (embeddings) -- (b);
  \draw[->]     (b.east)     .. controls +(10,0) and +(-10,0) ..  (latent.west);
  
  \draw[->]     (latent) -- (decoder);

  \draw[->]     (decoder) -- (Xhat);

  \draw[->]     (Xhat) -- (XhatLoss);
  \draw[->]     (Xhat.north) -- (discriminator.south);
  \draw[->]     (Xhat.north) -- (classifier.south);

  \draw[->]     (discriminator) -- (D_out);
  \draw[->]     (D_out)         -- (Loss1);
  \draw[->]     (classifier) -- (C_out);
  \draw[->]     (C_out)      -- (Loss2);

\end{tikzpicture}
\end{adjustbox}
}
\caption{Network architecture. The general architecture is that of an auto-encoder receiving couples of samples and conditions $(X, c)$ and outputting a reconstruction $\widehat{X}$. 
The latent space is stratified into the fiber coordinate $f$ (output of the bottleneck layer), and the base coordinate $b$ encoding conditions. To the auto-encoder architecture we have added the classifier $\overline{\Upsilon}$ coupled with a GRL to disentangle $f$ from $b$, the GAN discriminator $\Delta$ to ensure reconstruction realism, and the condition classifier $\Upsilon$ to prevent mode collapses.}
\label{fig:nnArchitecture}
\end{figure*}

\subsection{Auto-encoder definition}
\label{subsection:AE_def}

Just like in a standard auto-encoder \cite{AE1, AE2}, the encoder part of an FAE receives the sample $X$ and terminates in a bottleneck layer with a dimensionality lower than that of the input. The output $f$ of the bottleneck layer is concatenated with the output $b$ of an embedding layer that receives the condition label $c$. For a pair of sample and condition $(X, c)$, the output of the FAE is computed as 
$\widehat{X} := \Psi_{\theta_d}\left( f, b \right) $ with 
$f := \Phi_{1, \theta_e} \left( X \right)$ and 
$b := \Phi_{2, \theta_m} \left( c \right)$ .
In fact, $\Psi_{\theta_d}$ is the decoder function used in the previous section.

For convenience, we package all the parameters optimized for reconstruction as $\theta = (\theta_{e}, \theta_{m}, \theta_{d})$. We shall write:
\begin{align}
\label{def:varphi}
 	(f, b) := & \ \Phi_{\theta}\left( X, c \right)
 	        \ = \ \left( \Phi_{1, \theta_e} \left( X \right) \ ,
	                 \Phi_{2, \theta_m} \left( c \right) \right) \ .
\end{align}
The variable $f \in F$ represents the coordinates of the sample in the fiber space, while $b \in B$ represents the coordinates of the condition in the base space. 

In accordance with the metric \eqref{def:targ_space} of $\Xc$, the reconstruction loss we minimize is the Mean Squared Error (MSE):
\begin{align}
\label{def:MSE}
     \Lc^{mse}\left( \theta \ ; \Dc \right) 
:= & \frac{1}{N} \sum_{i=1}^N \left\| X_i - \widehat{X}_i \right\|_2^2 \ ,
\end{align}
where the dataset is $\Dc := \left( (X_i, c_i) \right)_{1 \leq i \leq N}$.

To help with training we added skip connections that propagate the latent variables $(f,b)$ to every layer of the decoder. The output $H_i$ of a decoder's $i$-th layer is defined by:
$$ H_{i} := \Psi_{i, \theta_d}\left( f, b, H_{i-1} \right) \ .$$

Once trained, samples can be generated from the network by sampling from $F$ and $B$ and using the decoder.

Finally, the auto-encoder uses the \emph{sine} function as non-linearity, that has recently been shown to improve detail representation of complex natural signals in deep neural networks \cite{SITZMANN}. This choice is also motivated by two reasons: (i) a smooth decoder $\Psi_\theta$ is needed in order to induce a structure of smooth Riemannian manifold (See Eq. \eqref{def:gTensor}). (ii) in contrast to the \emph{sigmoid} and \emph{tanh} functions, the \emph{sine} function has the added advantage of not saturating. Since \emph{sine} outputs are in $[-1,1]$, it follows that $F = [-1,1]^{n}$, where $n$ is the number of neurons in the bottleneck. Training algorithm is given in Algorithm \ref{algo:recons_update}.

\begin{algorithm}[H]
   \caption{FAE reconstruction training algorithm for one batch}
\begin{algorithmic}
   \STATE {\bfseries Name:} reconstruction-update
   
   \STATE {\bfseries Input:}
   
    Training data:
    $(X, \widehat{X})$,
    
    Parameters:
    $ \theta$
    
    Learning rates:
    $\mu$
 
   \STATE $g_{\theta} \leftarrow 
   \nabla_{\theta} \frac{1}{n} \sum_{i=1}^n \left| X_i - \widehat{X}_i \right|^2$
   
   \STATE $\theta \leftarrow \theta - \textrm{Adam}(\mu, g_{\theta})$
\end{algorithmic}
\label{algo:recons_update}
\end{algorithm}

\subsection{Geometrically disentangling fiber and base spaces}

Because the encoder $\Phi_{1, \theta_e} $is exposed to the training samples, information about sample condition could become jointly encoded into $F$ and $B$.
Fiber and base space would then be entangled, which could hinder both the generative process and the transport. Here we found that the principles of \emph{domain adversarial training} \cite{DANN1, DANN2} can be reliably applied to our case. The simple addition of a classifier predicting the condition $c$ from $F$ coupled with a Gradient Reversal Layer (GRL) worked well in our case (Fig. \ref{fig:MNIST_no_ap}). This method is identical to the one introduced by \cite{DANN1}, however we refer to it in the context of this work as \emph{condition adversarial training}, to highlight the fact that it is used to remove condition information from $F$. 
Other methods explicitly minimizing the mutual information \cite{MINE} could be explored in the future.

\subsubsection{Domain adversarial training}

The original goal of domain adversarial training \cite{DANN1, DANN2} is to make a classifier insensitive to changes in the domain of input. Here changing domains manifests in shifts in data distributions that are not related to the nature of the classification task. For a digit classifier, hand-written numbers and house numbers are two possible domains. In that case, being domain insensitive means that the classifier uses discriminating features that are domain independent. 

For an input couple data-label $(X, y)$, say a picture of a digit and its label, Ganin et al. define a latent variable $f$ and an estimated label $\widehat{y}$ as follows:
\begin{align}
	f           & := E_{\theta_e}\left( X \right) \ , \\
    \widehat{y} & := LC_{\theta_c}\left( f \right)
                   = LC_{\theta_c} \circ E_{\theta_e}\left( X \right) \ ,
\end{align}
where $E_{\theta_e}$ is an encoder, and $LC_{\theta_c}$ a label classifier.

The presence of domain information in the latent space is estimated by adding a domain classifier $DC_{\theta_d}$ that takes a latent variable $f$ as input and predicts the domain $\widehat{d}$ from it. 

\begin{align}
	\widehat{d} & := DC_{\theta_d} \left( f \right) = DC_{\theta_d} \circ E_{\theta_e}\left( X \right)\ .
\end{align}

In this context, there are two loss functions: $\Lc_{LC}\left( \theta_{e}, \theta_{c} \right)$ for the label classifier and $\Lc_{DC}\left( \theta_{e}, \theta_{d} \right)$ for the domain classifier. As explained in \cite{DANN1}[Fig 1], removing domain information from the latent variable is reduced to minimizing $\Lc_{LC}$ with respect to $(\theta_e, \theta_e)$, minimizing $\Lc_{DC}$ with respect to $\theta_d$ while maximizing it with respect to $\theta_e$.

By defining an aggregate loss on the sample $\left( (X_i, y_i) \right)_{1 \leq i \leq n}$ that is:
\begin{align}
\label{def:dann_loss}
    & \Lc^{adv}\left( \theta_{e}, \theta_{c}, \theta_{d} \ ; \ \left( (X_i, y_i) \right)_{1 \leq i \leq n} \right)\\ 
    := & \
    \Lc_{LC}\left( \theta_{e}, \theta_{c} \right)
    - \lambda \Lc_{DC}\left( \theta_{e}, \theta_{d} \right) 
    \nonumber \ ,
\end{align}
we need to implement the saddle-point optimization program that computes $(\widehat{\theta_{e}}, \widehat{\theta_{c}}, \widehat{\theta_{d}})$:
\begin{align}
\label{def:dann_argmin}
\left( \widehat{\theta_{e}}, \widehat{\theta_{c}} \right) := & \ 
\underset{ \theta_{e}, \theta_{c} }{\textrm{Argmin}} \ 
\Lc^{adv}\left( \theta_{e}, \theta_{c}, \widehat{\theta_{d}} \right) \\
\widehat{\theta_{d}} := & \ 
\underset{ \theta_{d} }{\textrm{Argmax}} \ 
\Lc^{adv}\left( \widehat{\theta_{e}}, \widehat{\theta_{c}}, \theta_{d} \right) 
\end{align}
That is found using the sequential stochastic updates given in \cite{DANN1}[Eq. 4, 5, 6]:
\begin{align}
\label{eq:dann_update_1}
\theta_e \leftarrow \ & \theta_e - \mu\left( \frac{\partial \Lc_{LC}}{\partial \theta_e} - \lambda \frac{\partial \Lc_{DC}}{\partial \theta_e} \right) \ ,\\
\label{eq:dann_update_2}
\theta_c \leftarrow \ & \theta_c - \mu \frac{\partial \Lc_{LC}}{\partial \theta_c} \ ,\\
\label{eq:dann_update_3}
\theta_d \leftarrow \ & \theta_d - \mu \frac{\partial \Lc_{DC}}{\partial \theta_d} \ ,
\end{align}
where $\mu>0$ is a given learning rate.

A great contribution of Ganin et al. is to recast these non-standard updates as a classical gradient descent, which can be implemented in most deep learning frameworks. This is achieved using a Gradient Reversal Layer (GRL) at the level of $f$, that replaces $\frac{\partial \Lc_d}{\partial \theta_{e}}$ with its opposite. A GRL $R_\lambda$ seamlessly outputs the identity map during the forward pass, while reversing (and scaling) the gradient during the back-propagation pass. The authors carefully refer to $R_\lambda$ as a "pseudo-function" and define it as:
\begin{align}
\label{def:GRL}
    \left\{
    \begin{array}{ccc}
    R_\lambda (x)           & := & x \ ,\\
    \nabla R_\lambda(x)     & := & -\lambda \ \textrm{Id} \ .
    \end{array}
    \right.
\end{align}
where $\textrm{Id}$ is the identity matrix. The notation is clearly abusive as the two expressions are incompatible for a mathematical function, while perfectly authorized for a neural network layer.

\subsubsection{Condition adversarial training}

The principles of \emph{condition adversarial training} are identical to domain adversarial training. The only difference being that, instead of removing domain specific information from the latent space, we seek to remove condition specific information from the fiber space. Thus we replace the domain classifier $DC_{\theta_d}$ over the latent space, by a condition classifier $\overline{\Upsilon}_{\theta_{ac}}$ over $F$, and use a GRL at the $F$ level. The effects of condition adversarial training are illustrated in Fig. \ref{fig:MNIST_no_ap}.

\begin{figure}[H]
\begin{center}
\includegraphics[width=0.9\textwidth]{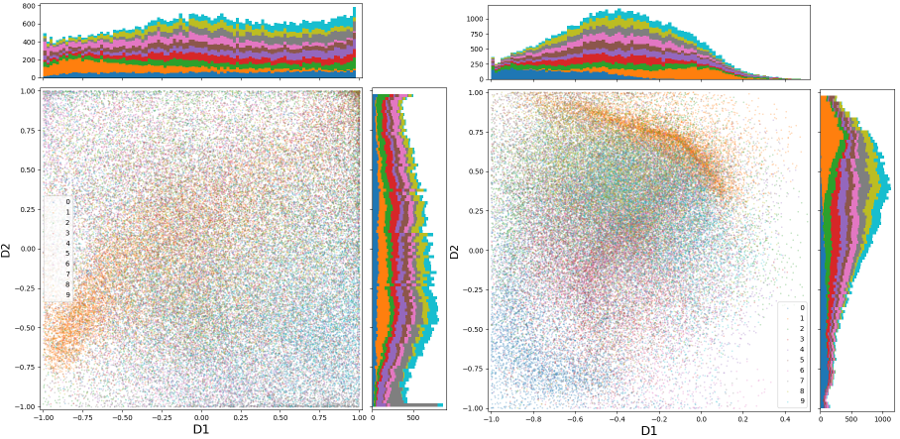}
\caption{MNIST fiber space for all conditions. Every digit is a condition represented by a color. Fiber space is $[-1,1]^2$. Left: network trained with condition adversarial training, right: without. Empirical distributions on fibers are closer to uniform, making them less distinguishable. 
}
\label{fig:MNIST_no_ap}
\end{center}
\end{figure}

The classifier $\overline{\Upsilon}_{\theta_{ac}}$ has output 
$
\ell := \overline{\Upsilon}_{\theta_{ac}}(f) \ ,
$
where $\ell$ is the vector of log-likelihoods
$$
\ell := \left( \log \P\left( \widehat{c} = c_i \right) \right)_{1 \leq i \leq n}
\ .
$$
$\overline{\Upsilon}_{\theta_{ac}}$ is trained by minimizing the cross-entropy loss with log-likehoods $\ell$, when the real condition is $c$:
\begin{align}
\label{def:xent}
\Lc^{xent}\left( \ell, c \right)
:=
- \log\left( \frac{\exp\left( \ell_{i(c)} \right)}
             {\sum_{i=1}^n \exp\left( \ell_i \right)} \right) \ ,
\end{align}
where $i(c)$ is the index of the condition $c$. $\overline{\Upsilon}_{\theta_{ac}}$ is coupled with a GRL at the level of $f$. Updates are thus defined as:
\begin{align}
\theta_e \leftarrow \ & \theta_e + \mu_{ac} \lambda \frac{\partial \Lc^{xent}(\overline{\Upsilon}(f), c)}{\partial \theta_e} \ ,\\
\theta_{ac} \leftarrow \ & \theta_{ac} - \mu_{ac} \lambda \frac{\partial \Lc^{xent}(\overline{\Upsilon}(f), c)}{\partial \theta_{ac}} \ ,
\end{align}
The loss \eqref{def:xent} is thus maximized over $\theta_{ac}$ and minimized over $\theta_{e}$. Training algorithm is given in Algorithm \ref{algo:cond_adv_update}.

\begin{algorithm}[tbh]
   \caption{FAE condition adversarial training algorithm for one batch}
\begin{algorithmic}
   \STATE {\bfseries Name:} cond-adv-update\\
   \STATE {\bfseries Input:}\\
    Training data:
    $f, c$\\
    Parameters:
    $ \theta_{e}, \theta_{ac}$\\
    Learning rates:
    $\mu_{ac1}, \mu_{ac2}$\\
    
    \STATE $\ell \leftarrow \overline{\Upsilon}_{\theta_{ac}}(f)$\\
    
    \STATE $g_{ac} \leftarrow \nabla_{\theta_{ac}} \left( - \log\left( \frac{\exp\left( \ell_{c} \right)} {\sum_{i=1}^n \exp\left( \ell_i \right)} \right) \right)$ \\ 
   \STATE $\theta_{ac} \leftarrow \theta_{ac} - \textrm{Adam}(\mu_{ac1}, g_{ac})$
 
    \STATE $g_e \leftarrow \nabla_{\theta_{e}} \left( - \log\left( \frac{\exp\left( \ell_{c} \right)}{\sum_{i=1}^n \exp\left( \ell_i \right)} \right) \right)$ \\ 
   \STATE $\theta_{e} \leftarrow \theta_{e} + \textrm{Adam}(\mu_{ac2}, g_e)$

\end{algorithmic}
   \label{algo:cond_adv_update}
\end{algorithm}

\subsection{Accessory objectives}
The MSE loss \eqref{def:MSE} on its own is not necessarily the optimal choice for every application and does not ensure reconstruction realism. Having accessory objectives aimed at improving the realism of generated samples is especially desirable for applications to datasets where the quality of samples cannot be evaluated by the naked eye. To address this issue we have included accessory objectives, that are not essential to the FAE architecture, but that the experimenter can elect to use depending on the task at hand. 

\subsubsection{Ensuring reconstruction realism}
Generative adversarial networks (GANs) \cite{GAN} are the state of the art when it comes to generating realistic images \cite{SGAN}, and have been shown to improve the quality of reconstruction of auto-encoders \cite{AAN}. Here we used a GAN objective to ensure that the samples generated by the FAE are realistic.

GANs involve the joint training of a discriminator network $D_{\theta_d}$ that learns to discriminate between generated and real samples, and a generator network $G_{\theta_g}$ that learns to counterfeit more realistic samples. The standard GAN optimization is classically defined as a min-max problem \cite{GAN}:
$$
   \min_{\theta_g} \max_{\theta_d} \ 
   \E\left[ \log\left( 1 - D_{\theta_d} \circ G_{\theta_g}(Z) \right)
          + \log D_{\theta_d}\left( X \right)
   \right]
$$
where the expectation is over the real samples $X$, and $Z$ an input used to generate counterfeited samples. The type of GAN objective can be chosen with respect to the task. To implement the adversarial loss, we add a discriminator $\Delta$ to the architecture and treat the decoder $\Psi$ as the generator. Here, we have used a standard GAN approach for images and Wassertein-GAN (WGAN) \cite{WGAN} for single-cell applications. The use of other GAN types such as MMD GANs \cite{MMD_GAN} could be explored in future work. Training algorithm is given in Algorithm \ref{algo:gan_update}.

\begin{algorithm}[tbh]
   \caption{FAE GAN training algorithm for one batch}
\begin{algorithmic}
   \STATE {\bfseries Name:} gan-update\\
   \STATE {\bfseries Input:}\\
    training data:
    $(X, \widehat{X}, c)$,\\
    parameters:
    $ \theta_{e}, \theta_{m}, \theta_{d}, \theta_{\Delta}$\\
    learning rates:
    $\mu_{\Delta_1}, \mu_{\Delta_2}$\\
 
   \STATE $g_{\theta_{gd}} \leftarrow 
   \nabla_{\theta_{\Delta}} \frac{1}{n} \sum_{i=1}^n \left[log D(x_i) + log(1 - D(\widehat{x_i})) \right]$
   \STATE $\theta_{\Delta} \leftarrow \theta + \textrm{Adam}(\mu_{\Delta_1}, g_{\theta_{\Delta}})$

    \STATE $\theta \leftarrow (\theta_{e}, \theta_{m}, \theta_{d}) $
    \STATE $g_{\theta} \leftarrow  \nabla_{\theta} \frac{1}{n} \sum_{i=1}^n log(1 - D(\widehat{x_i}))$
   \STATE $\theta \leftarrow \theta - \textrm{Adam}(\mu_{\Delta_2}, g_{\theta})$
\end{algorithmic}
   \label{algo:gan_update}
\end{algorithm}

\subsubsection{Ensuring condition discriminative features}
The GAN objective ensures the realism of generated samples, but does not prevent cases of mode collapses where reconstructions can be decoupled from their conditions, and still be deemed realistic by the discriminator. We prevent this by ensuring that the discriminative features of every condition are still present in the reconstructions.

We train a classifier ${\Upsilon}_{\theta_{c}}$ of the condition over real samples, which computes $ \ell := {\Upsilon}_{\theta_{c}}(X)$. It is trained by minimizing the cross-entropy loss with the log-likehoods $\ell$ over the real samples $X$. We then update the parameters $\theta_m$ and $\theta_d$ of respectively, $\Phi_2$ and $\Psi$, to minimize the cross-entropy of ${\Upsilon}_{\theta_{c}}(\widehat{X})$. Training algorithm is given in Algorithm \ref{algo:cond_fitting_update}.

\begin{algorithm}[tbh]
   \caption{FAE condition fitting algorithm for one batch}
\begin{algorithmic}
   \STATE {\bfseries Name:} cond-fitting-update\\
   \STATE {\bfseries Input:}\\
    Training data:
    $X, \widehat{X}, c$\\
    Parameters:
    $ \theta_{d}, \theta_{m}, \theta_{c}$\\
    Learning rates:
    $\mu_{c1}, \mu_{c2}$\\
    
    \STATE $\ell \leftarrow \Upsilon_{\theta_{c}}(X)$\\
    
    \STATE $g_{\theta_{c}} \leftarrow \nabla_{\theta_{c}} \left( - \log\left( \frac{\exp\left( \ell^{c}_{c} \right)} {\sum_{i=1}^n \exp\left( \ell_i \right)} \right) \right)$ \\ 
   \STATE $\theta_{c} \leftarrow \theta_{c} - \textrm{Adam}(\mu_{c1}, g_{\theta_{c}})$
 
    \STATE $\widehat{\ell} \leftarrow \Upsilon_{\theta_{c}}(\widehat{X})$
    
    \STATE $g_{d, m} \leftarrow \nabla_{\theta_{d}, \theta_{m}} \left( \log\left( \frac{\exp\left( \widehat{\ell_c} \right)}{\sum_{i=1}^n \exp\left( \widehat{\ell_i} \right)} \right) \right)$ \\ 
   \STATE $\left( \theta_{d}, \theta_{m} \right) \leftarrow \left( \theta_{d}, \theta_{m} \right)  - \textrm{Adam}(\mu_{c2}, g_{d, m})$
\end{algorithmic}
   \label{algo:cond_fitting_update}
\end{algorithm}

\subsection{Tying it together: training algorithm}
This section shows the details of training an FAE with the algorithm of each sub-routine presented in details. We denote by $\textrm{Adam}(\mu, g)$ a step obtained by the Adam gradient descent algorithm, with learning rate $\mu$ and gradient $g$. 
The training scheme we propose as Algorithm \ref{algo:fae_training} trains the different parts sequentially each with a specific learning rate. By setting different learning rates, one can fine tune the relative importance of the different objectives. Because the main objective is to minimize the reconstruction loss in \eqref{def:MSE}, we typically give it the highest learning rate. We used the Adam optimizer \cite{ADAM} for all objectives. Fig. \ref{fig:MNIST_curves} shows convergence for all objectives (main and accessory) on the MNIST dataset.

\begin{figure}
\begin{center}
\includegraphics[width=0.4\textwidth]{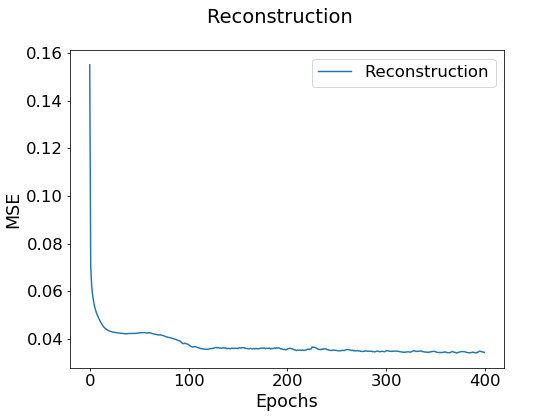}
\includegraphics[width=0.4\textwidth]{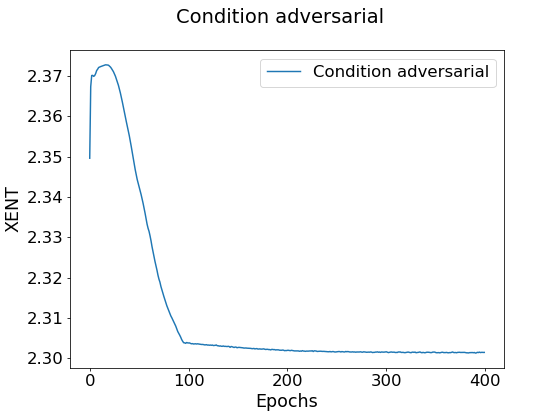}
\includegraphics[width=0.4\textwidth]{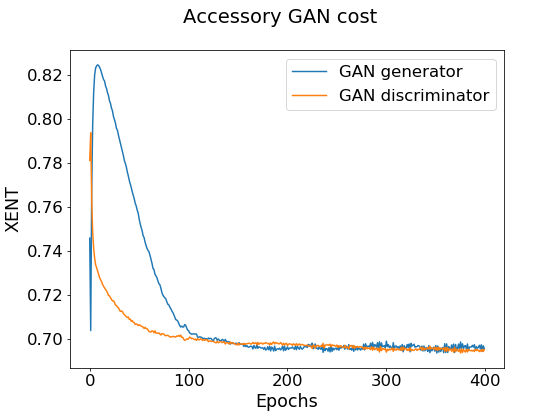}
\includegraphics[width=0.4\textwidth]{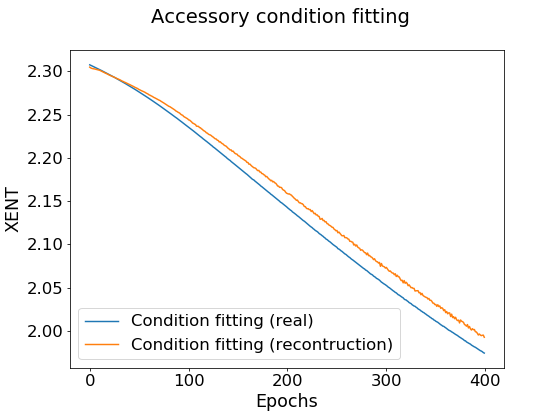}
\caption{Training curves obtained on the MNIST dataset showing convergence for the two main objectives (Reconstruction and Condition adversarial), as well as for the two accessory objectives (GAN and Condition fitting).}
\label{fig:MNIST_curves}
\end{center}
\end{figure}

\begin{algorithm}[H]
   \caption{FAE Training algorithm}
\begin{algorithmic}
   \STATE {\bfseries Input:}\\
    dataset: $\Dc$
    
    parameters: $ \theta_{e}, \theta_{m}, \theta_{d}, \theta_{ac}, \theta_{c}, \theta_{\Delta}$
    
    learning rates: $\mu_{mse}, \mu_{ac1}, \mu_{ac2}, \mu_{c1}, \mu_{c2}, \mu_{\Delta2}, \mu_{\Delta2}$
    
   \FORALL{$(X, c) \in \Dc$}
       \STATE $\widehat{X} \leftarrow \Psi_{\theta_{d}}\left( \Phi_{\theta_{e}, \theta_{m} }(X, c) \right)$
       \STATE reconstruction-update($X$, $\widehat{X}$, $\theta_{e}, \theta_{m}, \theta_{d}$, $\mu_{mse}$)
       
       \STATE $f \leftarrow \Phi_{1 \theta_{e}}\left( X\right)$
        \STATE condition-adversarial-update(
            $f, c$,
            $\theta_{e}, \theta_{ac}, \mu_{ac1}, \mu_{ac2}$
        )
        
       \STATE $\widehat{X} \leftarrow \Psi_{\theta_{d}}\left( \Phi_{\theta_{e}, \theta_{m} }(X, c) \right)$
        \STATE condition-fitting-update(
            $X$, $\widehat{X}$,
            $\theta_{d}, \theta_{m}, \theta_{c}, \mu_{c1}, \mu_{c2}$
        )
       \STATE $\widehat{X} \leftarrow \Psi_{\theta_{d}}\left( \Phi_{\theta_{e}, \theta_{m} }(X, c) \right)$
        \STATE gan-update($X, \widehat{X}$, $\theta_{e}, \theta_{m}, \theta_{d}, \theta_{\Delta}$, $\mu_{\Delta1}, \mu_{\Delta2}$)
   \ENDFOR
\end{algorithmic}
   \label{algo:fae_training}
\end{algorithm}

\section{Applications}
\label{section:Applications}
We first illustrate our method by experimenting on image datasets. We then benchmark the quality of created correspondences on single-cell datasets RNA-sequencing. Throughout this section, we contrast naive transport between two fibers, with geodesic transport.

\subsection{The principle of geodesic transport with MNIST}

\begin{figure}
\begin{center}
\includegraphics[width=0.4\textwidth]{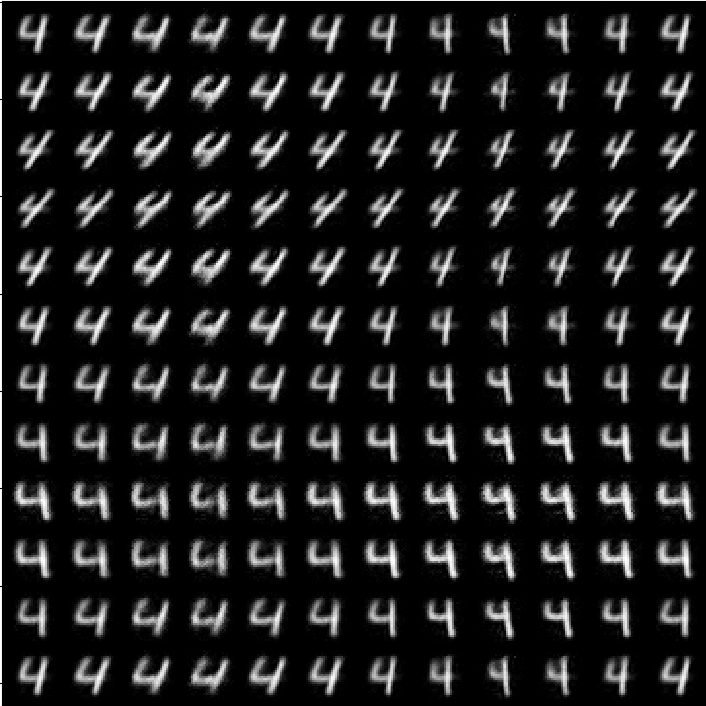}
\includegraphics[width=0.4\textwidth]{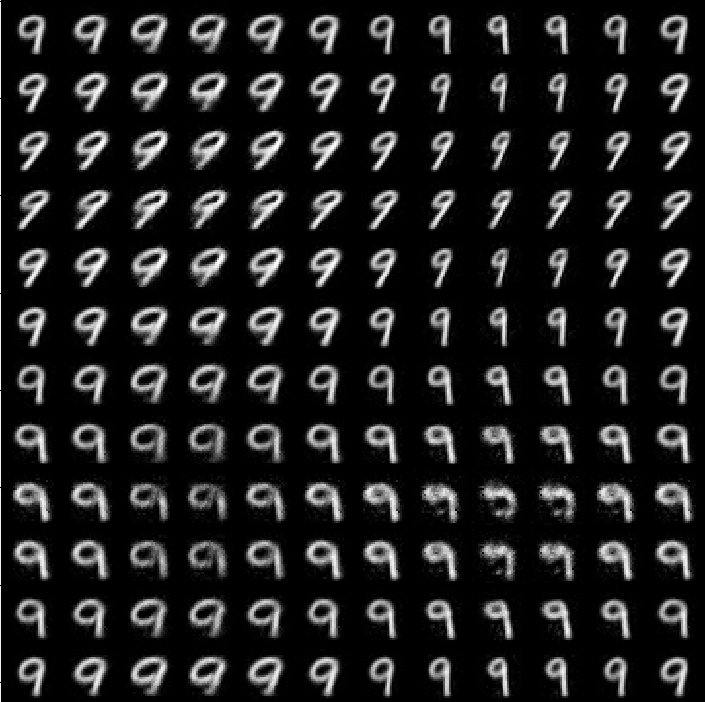}
\caption{Manifold plots for the fibers $F_4$ and $F_9$ in MNIST. Images were generated using an evenly spaced grid in the standard fiber space $F=[-1,1]^2$.}
\label{fig:MNIST_manifold_plots}
\end{center}
\end{figure}
We first experiment on MNIST. Here, the possible conditions are the digits $\left\{0, 1, \dots, 9 \right\}$, the base is $B = \R^2$ and the standard fiber is $F = [-1,1]^2$. Fig. \ref{fig:MNIST_manifold_plots} shows manifold plots obtained from an evenly spaced grid on $F_4$ and $F_9$. We can see that the reconstructions are of high quality and show a high diversity of samples despite the small bottleneck size ($2$ units). We also see that the learned latent space is contiguous as any coordinate $f \in [-1,1]^2$ yields a realistic digit. Finally, Fig. \ref{fig:MNIST_manifold_plots} shows that the learned space has an intrinsic organisation, as gradually moving on fibers gradually changes digit features. Manifolds for $F_4$ and $F_9$ exhibit similar structures, with points at the same coordinate having similar inclination and boldness. This shows that naive transport is capable of creating rather accurate correspondences between fibers of similar conditions. We do not give the manifold plot after geodesic transport as the difference with naive transport are barely perceivable.

\begin{figure}
\begin{center}
\includegraphics[trim=100 50 100 50, clip,
width=0.85\textwidth]{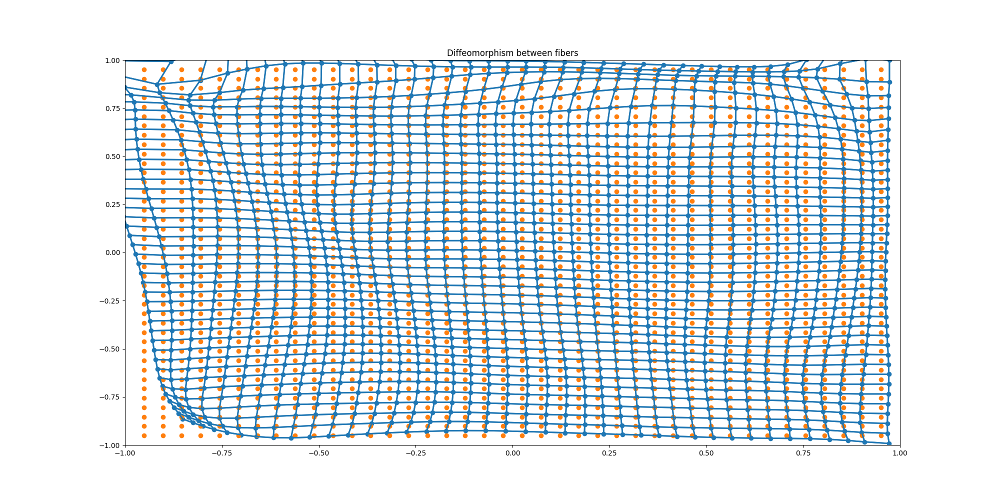}
\caption{Diffeomorphism between $F_4$ and $F_9$. The orange dots represent the original coordinates in $F_4$, the blue dots are their corresponding images in $F_9$ computed through geodesic transport.}
\label{fig:MNIST_diffeo}
\end{center}
\end{figure}

Let us now discuss the results of geodesic transport between $F_4$ and $F_9$ on the one hand, and between $F_1$ and $F_0$ on the other hand. 

\begin{figure}[tbp]
\begin{center}
\includegraphics[trim=0 0 0 0, clip, width=\textwidth]{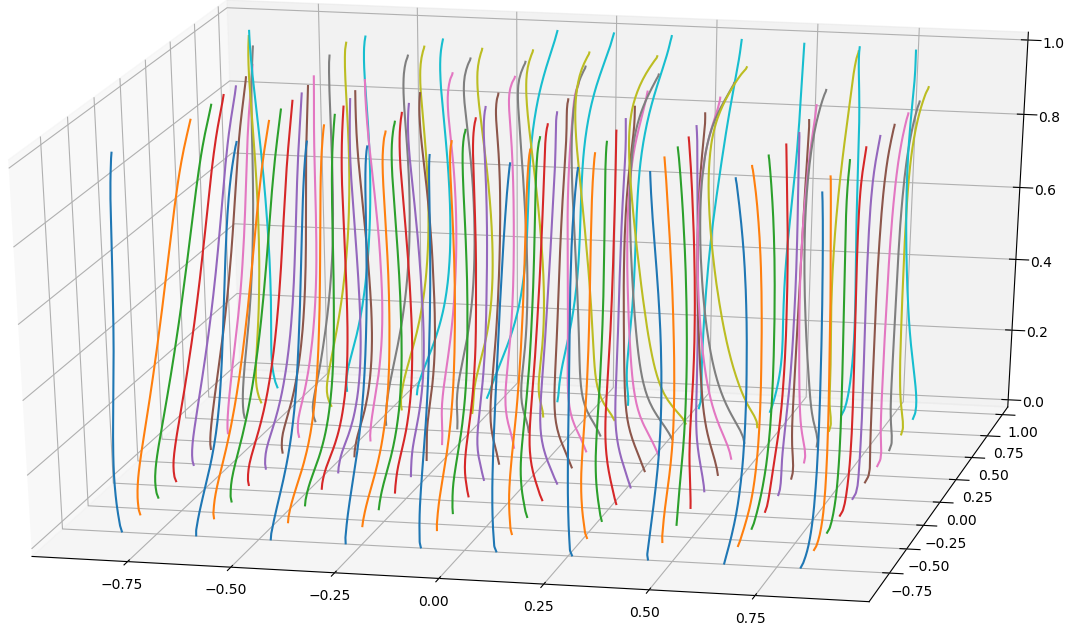}\\
\includegraphics[trim=0 0 0 0, clip, width=\textwidth]{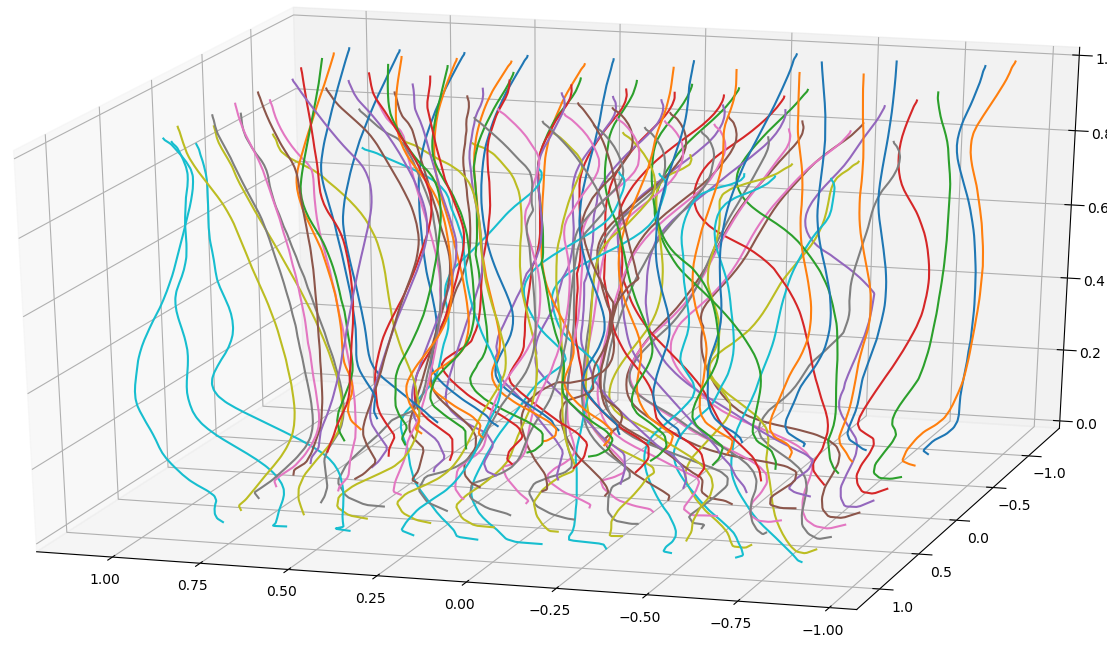}
\end{center}
\caption{Visualization of geodesic curves between two fibers isomorphic to $[-1,1]^2$. Top: From $F_4$ to $F_9$. Bottom: From $F_1$ to $F_0$. With $z$ being the height coordinate, the $z=0$ plane represents the starting fiber, while the $z=1$ plane represents the destination fiber.}
\label{fig:mnist_geodesics3d}
\end{figure}

\begin{figure}[tbp]
\begin{center}
\includegraphics[trim=200 100 150 100, clip, width=0.85\textwidth]{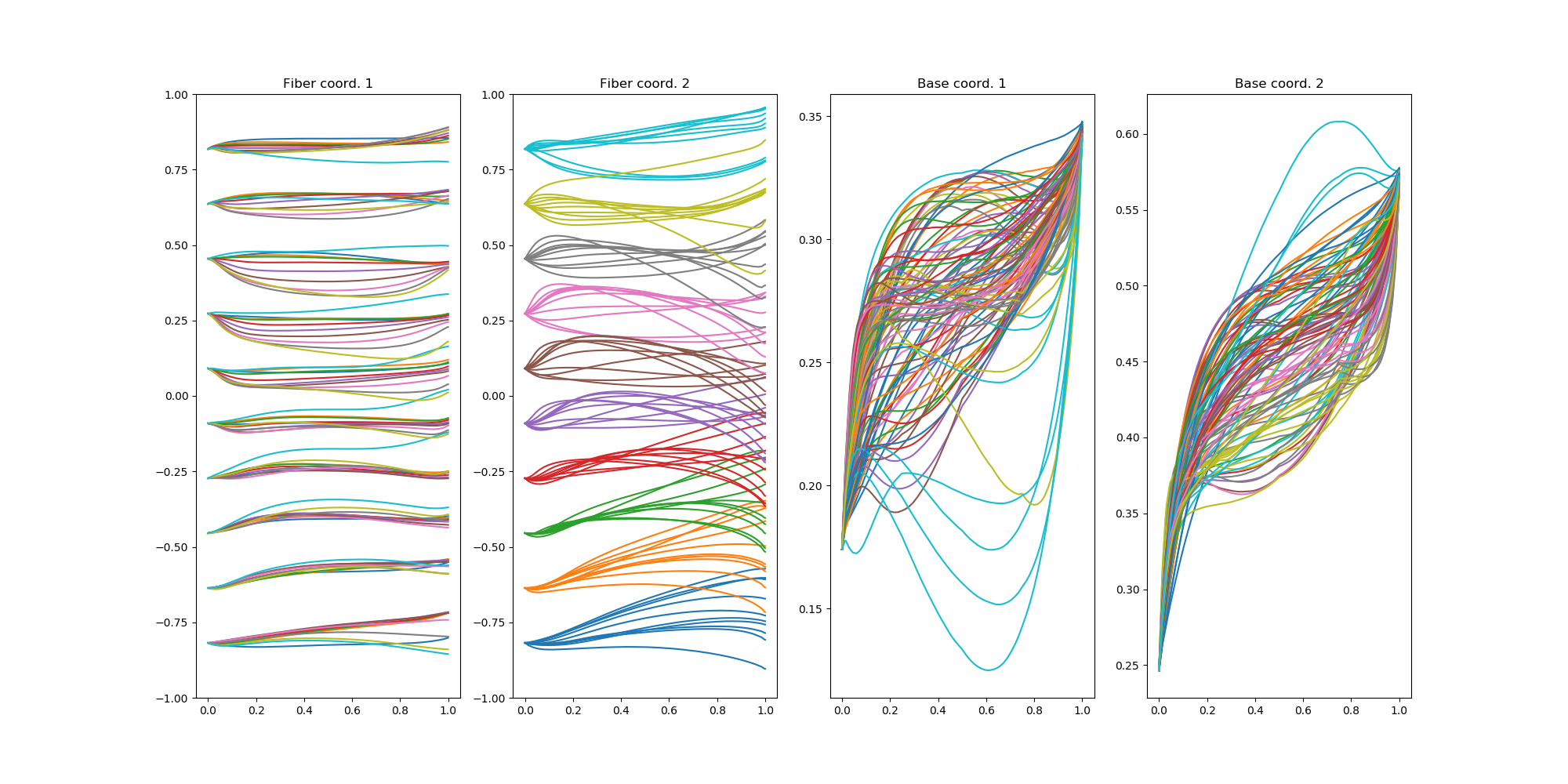}
\caption{Geodesics from $F_4$ to $F_9$. The $x$ axis displays time $t \in [0,1]$. The two left panels display coordinates in fiber space $F = [-1,1]^2$ and the two right panels display coordinates on $B=\R^2$.}
\label{fig:MNIST_geodesics}
\end{center}
\end{figure}

Fig. \ref{fig:MNIST_diffeo} gives an estimation of the diffeomorphism between fibers $F_4$ and $F_9$ as formulated in Theorem \ref{thm:main}. The starting points in $F_4$ are the orange dots obtained using an evenly spaced grid. The blue dots are the endpoints in $F_9$ for the calculated geodesics. The full 3D geodesics are given in the top portion of Fig. \ref{fig:mnist_geodesics3d}. The diffeomorphism induced through geodesic transport is globally close to the identity. This shows that naive transport can give a good approximation. However, the correction applied by geodesic transport is more apparent as we get closer to the edges. 

Fig. \ref{fig:MNIST_geodesics} displays a subset of the geodesics between fibers $F_4$ and $F_9$ used to generate Fig. \ref{fig:MNIST_diffeo}. These are not straight paths at constant speed, which shows the relevance of the Riemannian point of view. As naive transport is a good approximation in this case, geodesics on $F$ show relatively small variations. Most of the variations happen on $B$. This shows that, in the case of MNIST, a change in condition is more fundamental than a change in condition-specific features. Although these adjustments on $B$ do not change the endpoints, they still have a major impact on the interpolation between fibers.

\begin{figure}
\begin{center}
\includegraphics[trim=100 50 100 50, clip, width=\textwidth]{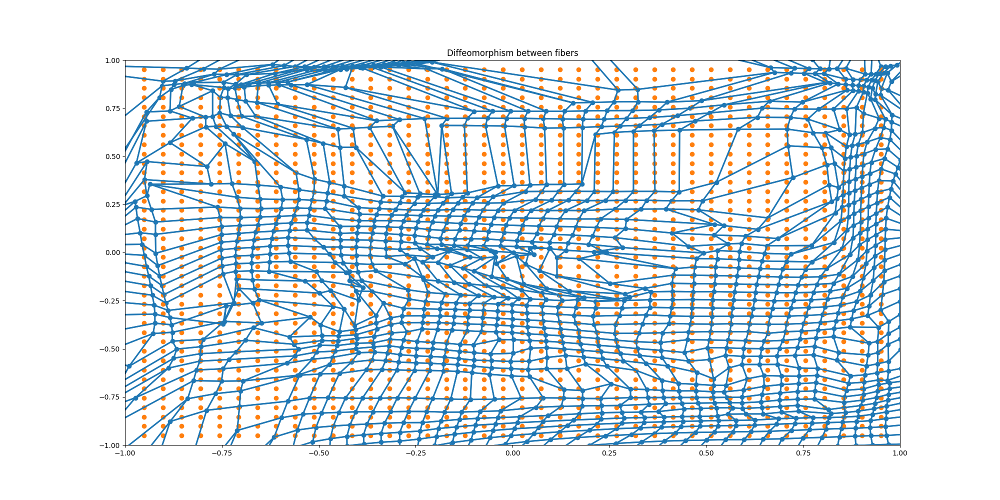}\\
\end{center}
\caption{Visualization of the (local) diffeomorphisms between two fibers, from $F_1$ to $F_0$. Orange dots represent the original coordinates in $F_1$, blue dots represent the correspondences computed through geodesic transport.}
\label{fig:mnist_diffeo_10}
\end{figure}

When examining the analogous figures for the geodesic transport from $F_0$ to $F_1$, a radically different pattern emerges.  On Fig. \ref{fig:mnist_geodesics3d}, geodesics between $F_0$ and $F_1$ (Top) are less straight than between $F_4$ and $F_9$ (Bottom). This suggests that the naive correspondence is not as strong, in accordance with intuition - the digits $4$ and $9$ are more similar. Thus, $F_1$ and $F_0$ should not be naively identified. Also, when comparing Fig. \ref{fig:MNIST_diffeo} with Fig. \ref{fig:mnist_diffeo_10}, the correspondence between $F_4$ and $F_9$ is much more global. Indeed, the correspondence between $F_1$ and $F_0$ is only locally stable, and shows more "shearing" and "tearing". At a theoretical level, this was already manifest in Theorem \ref{thm:main} where the diffeomorphism property is only local. Between $F_0$ and $F_1$, this property simply holds on smaller regions. In fact, Fig. \ref{fig:mnist_diffeo_10} is already stabilized thanks to a regularization parameter $\lambda_{\textrm{reg}}=0.02$ (See discussion in conclusion).

In the light of these experiments on MNIST, let us conclude with two intuitive criteria for assessing the quality of naive transport:
\begin{enumerate}
    \item Geodesic transport gives straight geodesic curves. In this case, naive transport transport and geodesic transport give similar results.
    \item The local diffeomorphism \eqref{def:correspondence} is stable i.e. no sharing and tearing. Upon estimating the norm of the diffeomorphism's Jacobian, this criterion can be made quantitative.
\end{enumerate}
In the absence of the first criterion, geodesic transport should be seen as superior to naive transport. In the absence of the second criterion, there is no good correspondence between conditions - as in the example of $F_0$ and $F_1$.

\subsection{Geodesic interpolation on the Olivetti dataset}

Here, the possible conditions are the 40 different persons in Olivetti. The dataset contains only 10 images for each condition, taken at different angles and with different lighting conditions, making it a challenging dataset for generative networks. Given the very small sample size here we used $F = [-1,1]$ as the standard fiber and $B = \R^{10}$. 

\begin{figure}[H]
\begin{center}
\includegraphics[width=0.90\textwidth]{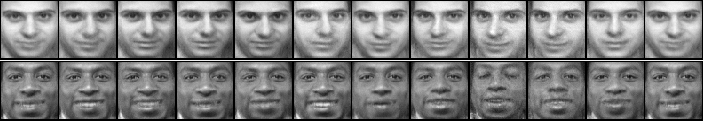}
\caption{Olivetti manifold plots. Images generated using an evenly spaced grid on the whole fiber space $F = [-1,1]$, for individuals 8 (top) and 21 (bottom).}
\label{fig:olivetti_manifold_plot}
\end{center}
\end{figure}

\begin{figure}[tbp]
\begin{center}
\includegraphics[width=0.90\textwidth]{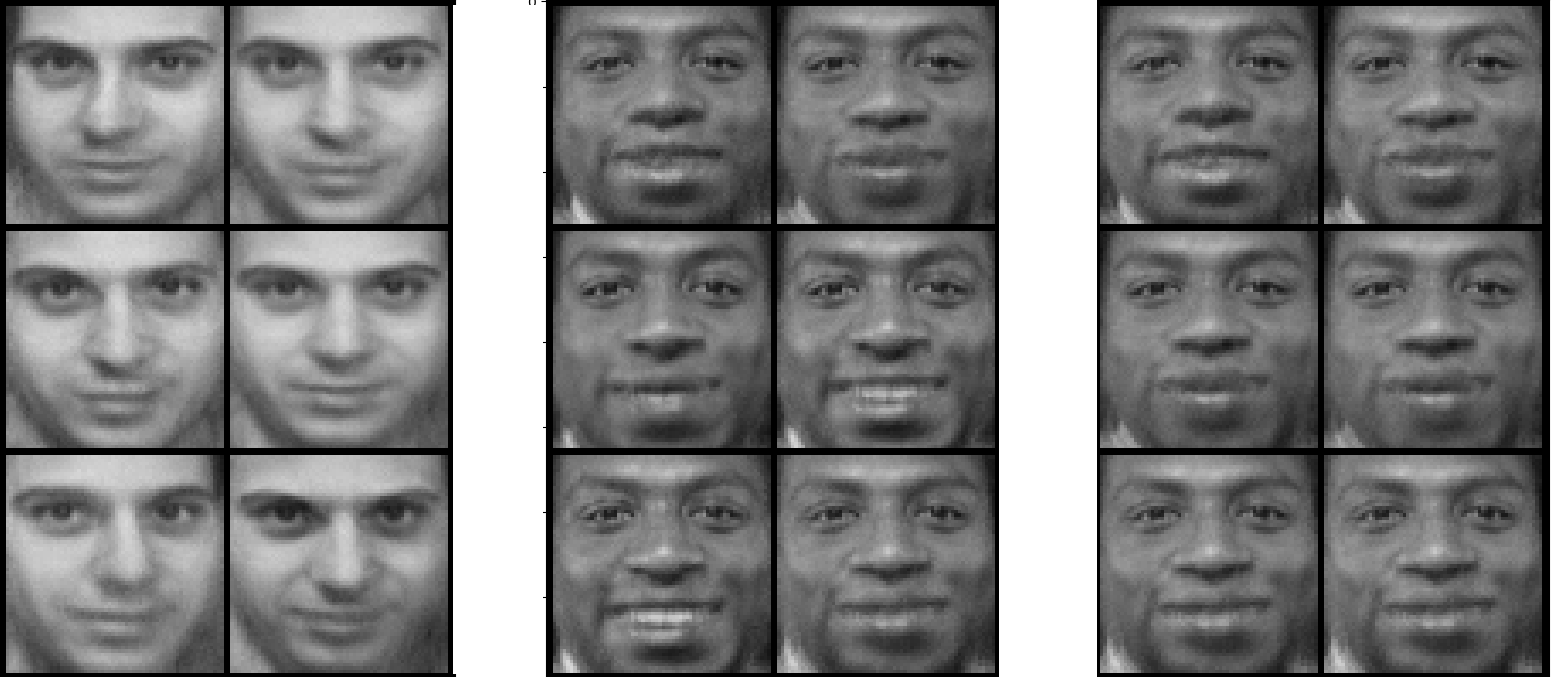}
\caption{Calculated correspondences from $F_8$ to $F_{21}$. Starting points on $F_8$ (left), correspondences calculated through naive transport (center), correspondences calculated through geodesic transport (right). Geodesic transport hides the teeth from subject 21.}
\label{fig:olivetti_correspondences}
\end{center}
\end{figure}

\begin{figure}[tbp]
\begin{center}
\includegraphics[width=0.90\textwidth]{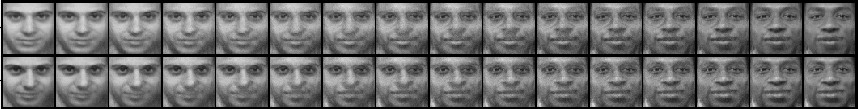}
\caption{Geodesic interpolation from $F_8$ to $F_{21}$. Individual $8$ gradually morphs into individual $21$.}
\label{fig:olivetti_interpolation}
\end{center}
\end{figure}

\begin{figure}[tbp]
\begin{center}
\includegraphics[width=1.0\textwidth]{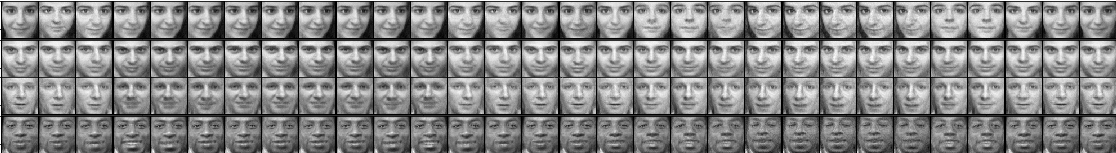}
\end{center}
\caption{Manifold plots of 30 generated images generated using an evenly spaced grid. Each person is a condition and $F = [-1, 1]$. The original dataset contains only 10 examples per person. We see that naive transport places face of similar angles at similar positions in $F$}
\label{fig:olivetti_faces}
\end{figure}

Fig. \ref{fig:olivetti_manifold_plot} shows generated samples for individuals 8 and 21. Despite the limited dataset size and the very small size of $F$, the network is able to generate more samples than in the training set. Individual 8, gradually turns his head, while individual 21 gradually smiles.
Fig. \ref{fig:olivetti_correspondences} shows the results of naive and geodesic transport from $F_8$ to $F_{21}$. Interestingly, geodesic transport created more accurate correspondence by hiding the teeth. Finally, Fig. \ref{fig:olivetti_interpolation} shows interpolations along the paths used to generate Fig. \ref{fig:olivetti_correspondences}. Individual 8 gradually transforms into individual 21. The network is able to generate faces from intermediary points which shows that it was able to generalize in both $B$ and $F$. Finally, we evaluated the smoothness of the learned fiber spaces, and the generalization capacity of FAEs by generating more samples than are present in the training dataset. Fig. \ref{fig:olivetti_faces}, shows 30 images for 4 individuals (3 times more that in the original dataset). This figure shows a smooth interpolation for every person, with faces at similar angles corresponding to similar coordinates in the fiber space.

\subsection{Evaluation on single-cell data}

Single-cell RNA sequencing measures the gene expression of each cell individually. The result is a matrix where each cell is represented by a vector of gene expressions.
However, differences in sample handling and technical platforms leave strong imprints that uniquely mark each batch and overshadow the biology. These imprints are commonly referred to as \emph{batch effects}. The process of \emph{batch correction} refers to the integration of batches together while preserving relevant biological signal, such as cell types that can be inferred through the expression of distinct gene modules.

\begin{figure*}[htp!]
\centering
\begin{adjustbox}{width=\linewidth}
\includegraphics[width=0.35\textwidth]{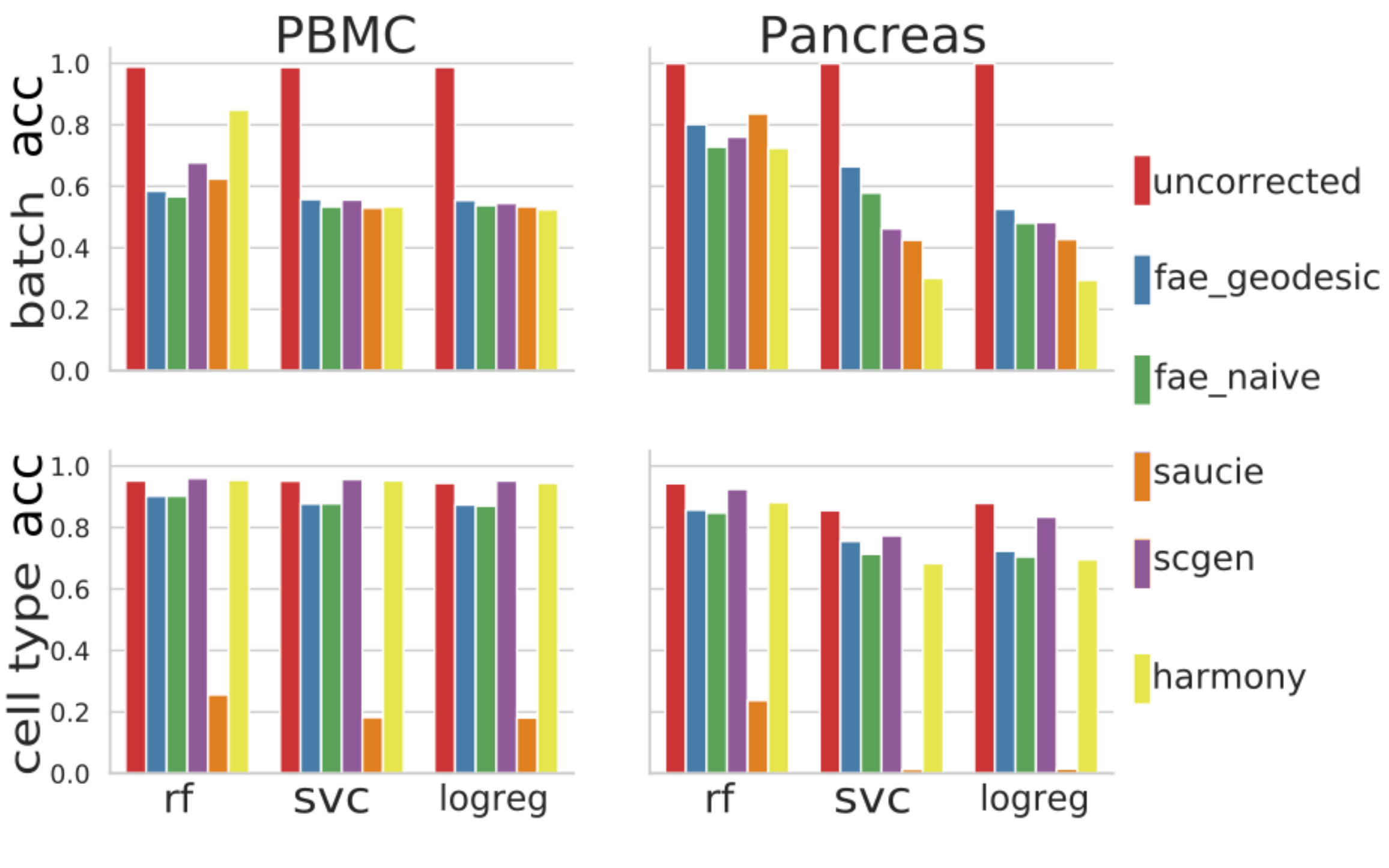}
\includegraphics[width=0.15\textwidth]{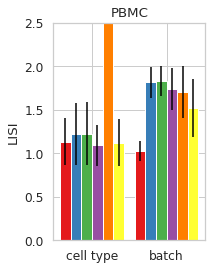}
\includegraphics[width=0.15\textwidth]{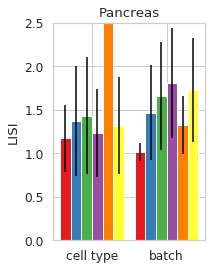}
\includegraphics[width=0.30\textwidth]{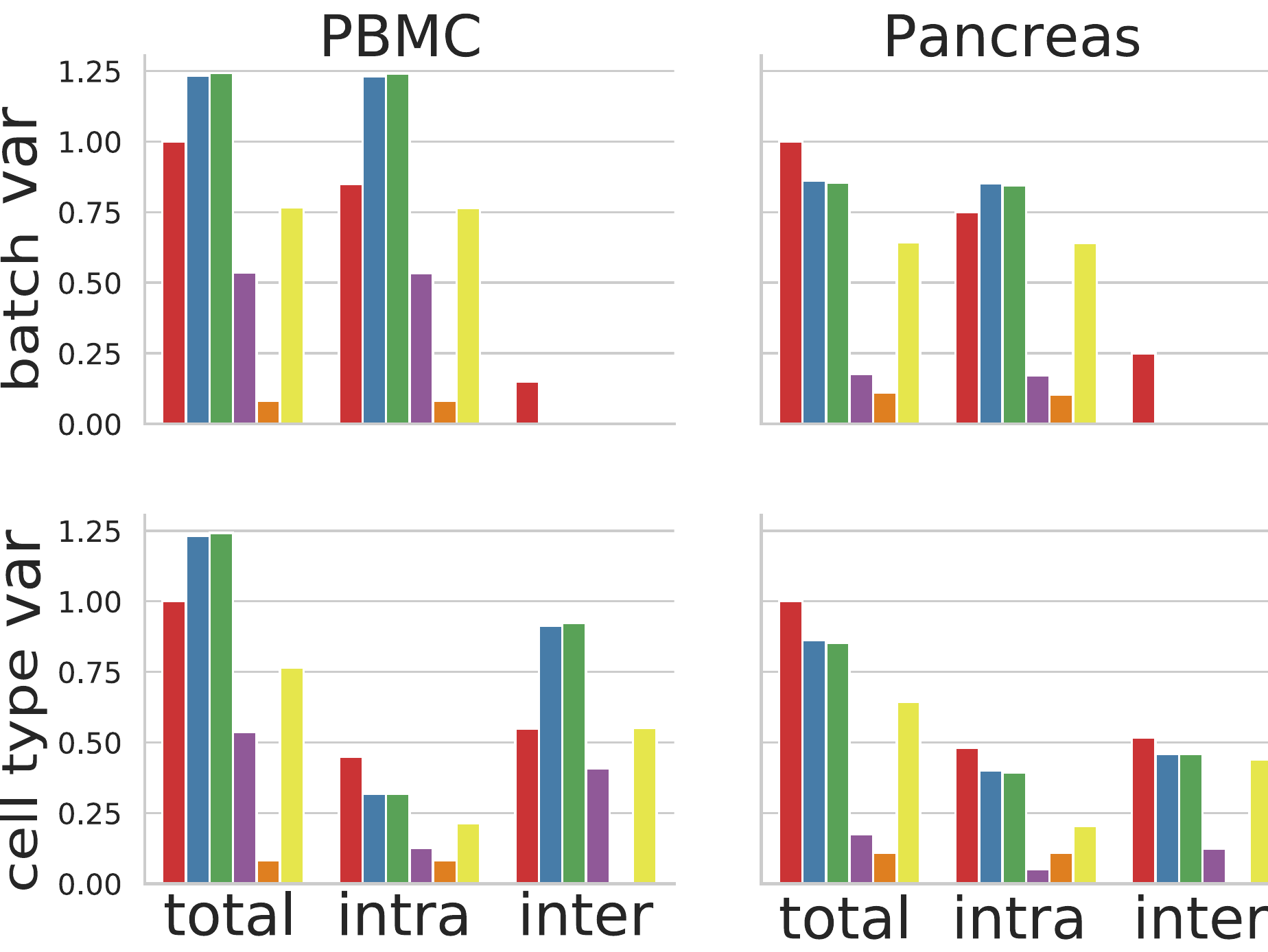}
\end{adjustbox}
\caption{Left: Accuracies for uncorrected and batch correction methods on both dataset. Batch accuracy (lower is better) and, cell type accuracy (higher is better) are reported for random forest (rf), support vector classifier (svc) and logistic regression (logreg).
Middle: LISI scores for uncorrected and batch correction methods on both datasets. LISI on cell type (closer to 1 is better), LISI on batch (higher is better). Error bars display the standard deviation.
Right: Total variance and Ward's variance decomposition, for uncorrected data and batch correction methods. Total variance has been normalized to 1, on the uncorrected dataset.
}
\label{fig:Barplots}
\end{figure*}

We use FAEs to represent batches as separate conditions, we then correct the batch effect by transporting all cells to a single reference batch. We benchmark our methods against the current state of the art in batch correction: Harmony \cite{HARMONY}, and two neural networks developed to handle and batch correct single-cell RNA sequencing data: scGen \cite{SCGEN} and SAUCIE \cite{SAUCIE}. As Harmony runs on principal components, we ran our benchmarks on PCA reduced data by using the first 20 principal components, in line with the methodology of \cite{BatchBenchmark}.

We quantify batch correction quality using the prediction accuracies of three classifiers and the batch correction metric LISI \cite{HARMONY}. 
Finally, we use Ward's variance decomposition to quantify how much changes in variance can be attributed to variations within groups, as opposed to in-between groups \cite{SAPORTA}[p.258]. LISI and Ward's method are detailed in supplementary material.
We benchmark all methods on two datasets. The first contains two batches of Peripheral Blood Mononuclear Cells (i.e., PBMCs): unstimulated and stimulated (with INF-b\footnote{Interferon-beta}) \cite{PBMC}. The second is a compilation of 4 published pancreatic datasets that have been generated by different groups using 4 distinct single-cell RNA sequencing experimental approaches \cite{PANCREAS}. This dataset is therefore the more challenging of the two.

A successful correction removes batch imprint and conserves cells biological identity. Therefore we report in Fig. \ref{fig:Barplots} the accuracy on predicting the batch (i.e., lower is better) and the accuracy on predicting the cell type (i.e., higher is better). All methods performed well when it comes to removing the batch signal. Despite using a bottle-neck about 10 times lower \footnote{We used 10 for pancreas and 16 for PBMC vs 100 for scGen.}, our transport method shows results close to to scGen \cite{SCGEN}. SAUCIE over-corrected the batch effect at the expense cell type identity. Compared to naive, geodesic transport increases both cell type and batch predictability. For the LISI scores, once again, naive and geodesic transport show results on par with scGen.

\begin{figure}[H]
\begin{center}
\includegraphics[trim=0 0 0 0 0, clip, width=0.8\textwidth]{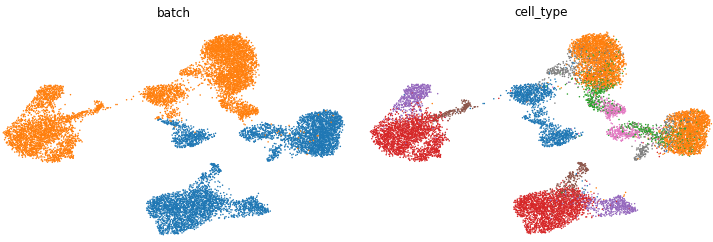} \\
\includegraphics[trim=0 0 0 0 0, clip, width=0.8\textwidth]{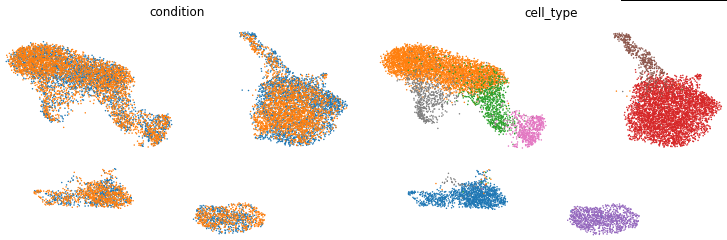} \\
\includegraphics[trim=0 0 0 0, clip, width=0.8\textwidth]{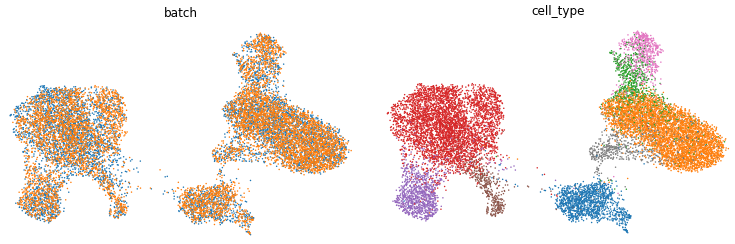}
\end{center}
\caption{UMAP visualization of PBMC cells. Left column: cells colored by batch, Right: colored by cell types. From top to bottom: uncorrected data, scGen, geodesic transport (the plot of naive transport is very close to the naked eye). Transport conserves cell types relationships by keeping purple cells (i.e., CD16 monocytes) close to red cells (i.e., CD14 monocytes), which are related to each other.}
\label{fig:umap_pbmc}
\end{figure}

\begin{figure}[!htp]
\begin{center}
\includegraphics[trim=0 0 0 0, clip, width=0.8\textwidth]{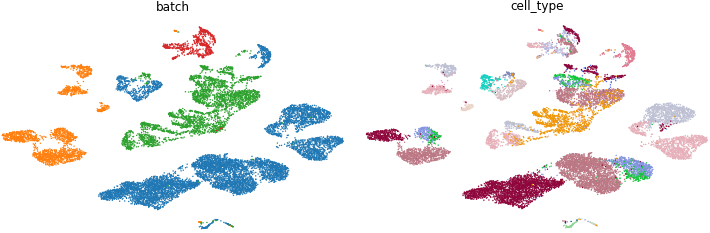}\\
\includegraphics[trim=0 0 0 0, clip, width=0.8\textwidth]{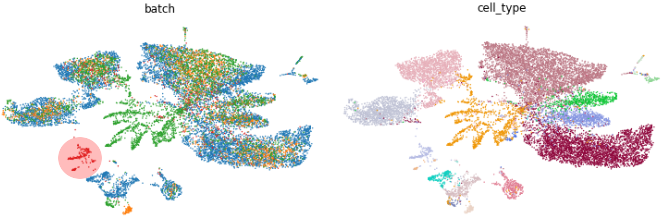}\\
\includegraphics[trim=0 0 0 0, clip, width=0.8\textwidth]{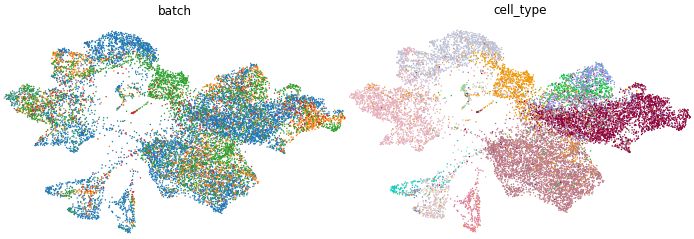}
\end{center}
\caption{UMAP visualization of pancreas cells. Left column: cells colored by batch, right: colored by cell types. From top to bottom: uncorrected data, scGen (bottleneck size: 100), naive transport (bottleneck size: 16). Contrary to scGen, naive transport was able to integrate cells form the red batch despite the small sample size. This suggests that FAE are better at integrating datasets of small sample sizes}
\label{fig:umap_pancreas}
\end{figure}

Ward's variance decomposition in Fig. \ref{fig:Barplots}, shows that transport is the only method capable of retaining a significant part of the original variance of the uncorrected dataset. This suggests that our method retains more of the biological signal. On the PBMC datasets, we observe increases in variance within batches and between cell types. Such increases could be due to the network imputing values for missing genes: due to single-cell experimental technical issues, gene expression matrices are very sparse (over 80\% to 90\%).


Fig. \ref{fig:umap_pbmc} shows UMAP plots \cite{UMAP} for uncorrected data, transport and scGen. After correction, batches overlap and cells cluster by cell types. In contrast to scGen, transport kept related cell types, CD16 monocytes (purple) and CD16 monocytes (red), close to each other as in the original uncorrected dataset. This suggest that FAEs are better at conserving relevant structures from datasets, and echoes variance decomposition results (Fig. \ref{fig:Barplots}). Finally, compared an FAE to scGen in the context of integrating batches of different natures, as every batch was generated using a different experimental approach, and different sizes. As shown in Fig. \ref{fig:umap_pancreas}, contrary to scGen, the  FAE was able to integrate cells from the the smallest batch (in red) with the rest of dataset. These results suggest that FAEs are more capable of handling datasets of small sample sizes. 


\section{Conclusion}
\label{section:conclusion}

In this work we proposed: 
(i) a geometric framework for representation learning which disentangles conditions from sample variations, 
(ii) a method for computing correspondences between conditions, 
(iii) a neural network architecture implementing the framework. 
We restricted ourselves to cases where conditions are of similar nature (e.g. datasets containing the same cell types). Consequently, naive and geodesic transport yield similar results. In turn, one could argue that this proximity morally measures similarity between conditions.

\medskip

Here are possible future directions of research:

\medskip

{\bf Regarding the geometric framework: } In this paper, we restricted ourselves to a Euclidean finite dimensional target space. The correct distance and the correct induced topology is in generally dependent on the domain of application. An interesting infinite dimensional example is given by the Wasserstein distances, which have very desirable properties in imaging. Efficient computations became possible since the introduction of the entropic regularization technique in \cite{C13} and the Riemannian geometry of the Wasserstein space is already on a solid mathematical footing via Otto calculus \cite{Villani_OT}[Chapter 15].

Also, we used in applications the hypercube $F=[-1, 1]^n$ as standard fiber. This is indeed simpler for practical implementation although the theory we just developed works better in a boundary free setting. As such it is natural to consider other manifolds and manifold-valued latent variables have recently been considered in \cite{DFDKT18} and \cite{XD18}.

\medskip

{\bf Regularization in high dimensions: }
We have found that the computation of geodesics becomes unstable in the case of high dimensional latent spaces, which could be referred to as a curse of dimensionality phenomenon. To solve the problem, we introduced a regularization cost so that instead of minimizing the energy functional $\Ec(\gamma)$ among admissible curves, we minimize a loss:
\begin{align}
\label{eq:lambda_reg}
   \Lc(\gamma) & = \Ec(\gamma) + \lambda_{\textrm{reg}}\left\| \gamma_{t=1} - \gamma^{\textrm{naive}}_{t=1} \right\|_2^2 \ .
\end{align}
Here $\gamma^{\textrm{naive}}_{t=1} = (f_1, b_2)$ is the result of naive transport. Thanks to the additional term, minimizing curves do not stray too far from the results given by naive transport. Other methods should be explored. In general, we strongly believe that the latent spaces of neural nets can and should be forced to have desirable features.


\bigskip

\bigskip

{\bf Acknowledgements:} 

We would like to thank Maude Dumont-Lagacé for her feedback and expert sketching skills in drawing Fig. \ref{fig:concept}.



\appendix

\newpage

Now, for the supplementary material where we will recall some statements of the paper and give more details, for reader's convenience. Also the code is available in an anonymous repository at:

\url{https://github.com/tariqdaouda/FiberedAE}

\section{Hyperparameters, datasets and training hardware}

\subsection{For the numerics of geodesics}

\

\medskip

{\bf Choices of hyper-parameters:} 
The parameters tracked in the automatic differentiation framework are the coefficients of $\gamma \in V_N$. For numerical experiments, we used $\Delta t = 1/256$ and $N=6$. The gradient descent used the classical RMSprop stepper in pyTorch.

A case-by-case description of learning rates and regularization parameters $\lambda_{\textrm{reg}}$ (see Eq. \eqref{eq:lambda_reg}) is available in JSON format:\\
\url{https://github.com/tariqdaouda/FiberedAE/tree/master/demos/geodesics_configurations}

\medskip

{\bf Training hardware:}
Computation of geodesics can be done in parallel for each geodesic separately, on independent copies of the trained neural network. In this context, we simply use multiple CPU threads and our machine of choice was a server with 192 cores Intel® Xeon® CPU E7-8890 v4 @ 2.20GHz and 512 GB of RAM. This is especially useful for batch correction since datasets come with tens of thousands cells and we needed one geodesic per cell. Otherwise, in order to compute geodesics in the hundreds as in Fig. \ref{fig:MNIST_geodesics}, a laptop is more than enough.

\subsection{For training FAEs}

All FAEs where trained on a single GPU on a Dell Precision 5530. RAM: 31GiB, CPU: Intel(R) Core(TM) i7-8850H CPU @ 2.60GHz, GPU: Quadro P2000 Mobile.

\subsection{Datasets}
\subsubsection{Image datasets}
We used the version of MNIST \cite{MNIST} provided by pyTorch and the version of Olivetti provided by scikit-learn \cite{SKLEARN}.

\subsubsection{Single cell datasets pre-processing}
We used the version of the pancreas dataset provided by \cite{SCGEN} dataset size is $(14,693, 2448)$. For the PBMC dataset \cite{PBMC}, we downloaded the pre-normalized version of the dataset provided by \cite{TRVAE}. We normalized gene expressions per cell, transformed expressions using $log(expression +1)$, and used only the top $2000$ highly variable genes after processing dataset size was: $(13576, 2000)$.

For FAE training, all expressions for all datasets where normalized in $[0, 1]$ by substracting the minimum value and dividing by the maximum value. As Harmony runs on principal components, we run our benchmarks on PCA reduced data by using the first 20 principal components, in line with the methodology introduced by \cite{BatchBenchmark}.

\section{Image datasets experiments}

All FAEs hyper-parameters where optimized using grid search, with reconstruction learning rates at least 10 times bigger than the highest second learning rate. We considered the following:
\begin{itemize}
    \item Learning rates: ($10^{-3}$, $2.10^{-3}$, $10^{-4}$, $2.10^{-4}$, $10^{-5}$, $2.10^{-5}$, $10^{-6}$, $2.10^{-6}$).
    \item Hidden layer sizes: $(64, 128, 256)$.
    \item Bottleneck (fiber) sizes: $(1, 2, 5,  8, 10, 16, 32)$.
    \item Depths: $(10, 20, 30, 40, 50, 80, 100)$.
    \item Embedding sizes ($\dim B$): $(1, 2, 10)$.
\end{itemize}

\section{Single-cell RNA sequencing experiments}

\subsection{Classifiers training}

We used three classifiers form sklearn v0.22 \cite{SKLEARN}:
Logistic regression (\emph{LogisticRegression}, with solver \emph{lbfgs}), Suport vector classifier (\emph{SVC} with kernel \emph{rbf}), random forest (\emph{RandomForestClassifier} with $500$ estimators). For all other hyper-parameters we used the defaults of sklearn v0.22.

All classifier were trained with class balancing and were tested on a randomly selected test set containing 25\% of the total dataset.

\subsection{FAE hyper-parameters selection}
All FAEs hyper-parameters where optimized using grid search, with reconstruction learning rates at least 10 times bigger than the highest second learning rate. We considered the following:

\begin{itemize}
    \item Learning rates: ($10^{-3}$, $2.10^{-3}$, $10^{-4}$, $2.10^{-4}$, $10^{-5}$, $2.10^{-5}$, $10^{-6}$, $2.10^{-6}$).
    \item Hidden layer sizes: $(64, 128, 256)$.
    \item Bottleneck sizes: $(1, 2, 5,  8, 10, 16, 32)$.
    \item Depths: $(10, 20, 30, 40, 50, 80, 100)$.
    \item Embedding sizes ($\dim B$): $2$ for pancreas and $1$ for PBMC.
\end{itemize}

\subsection{FAE Architectures}

All architectures are available in JSON format at:\\
\url{https://github.com/tariqdaouda/FiberedAE/tree/master/demos/configurations}

\subsection{scGen architecture}
We used the following hyper-parameters: 
\begin{itemize}
    \item z\_dimension: 100
    \item learning\_rate: 0.001
    \item dropout\_rate: 0.2
    \item alpha: 0.00005
\end{itemize}

\subsection{SAUCIE architecture}
We trained this model using the following hyper-parameters: 
\begin{itemize}
    \item lambda\_b=$0.1$,
    \item lambda\_c=$0$,
    \item layer\_c=$0$,
    \item lambda\_d=$0$,
    \item layers=$[512,256,128,2]$,
    \item activation=$ReLU$,
    \item learning\_rate=$0.001$
\end{itemize}

\subsection{Metric: LISI (Local Inverse Simpson Index)}

LISI measures the inverse Simpson Index in the local neighbourhood of each cell. Because it is a per-cell measure, here we reported the aggregated mean and standard deviation metrics over all cells. We used the implementation of \cite{HARMONY}.

For a given cell its LISI score is computed as follows. Let $S$ be the inverse Simpson index: $$ S= \frac{1}{\sum_{i=1}^Z p(i)} $$ where $Z$ is the number of batches in the dataset, and $p(i)$ the probability of the batch $i$ being present in the local neighbourhood of that cell. To define the neighbourhood of cells, probabilities are computed using a gaussian kernel centered over each cells (here we used a perplexity of $30$ as suggested by \cite{HARMONY}).The resulting $S$ is the average number of cells needed to be sampled before two cells from the cell batch are drawn. In a unmixed dataset the LISI score for most cells is close to $1$, as it only takes on average $1$ draw to a get cell from the same batch. In a well mixed dataset the value should be close to the total number of batches. 

Just like \cite{HARMONY} we used two versions of the LISI score. One on the batch, and one on the cell-type. Both score are computed in the exact same way, they are however interpreted differently. As the LISI score on batches should increase to become closer to the number of batches, the LISI score on cell-types should remain close to $1$. This is because cells of the same types should remain clustered together after correction. An increase in the LISI score on cell-types indicates a loss of cell-type specific features and therefore of relevant biological information.

\subsection{Metric: Huygens-Ward variance decomposition}
Consider a sample
$$\left\{ X_1, \dots, X_N \right\} \subset \R^n$$
which can be clustered into $k$ groups $\left( G_i \ ; \ 1 \leq i \leq k \right)$. By definition, the global sample mean and variance are:
\begin{align*}
 m        := \frac{1}{N} \sum_{i=1}^N X_i \ , & \quad \quad
 \sigma^2 := \frac{1}{N} \sum_{i=1}^N \left\| X_i - m \right\|_2^2 \ .
\end{align*}

On the one hand, assimilating each group $G_j$ to its center of mass
$$ m_j := \frac{1}{|G_j|} \sum_{X_i \in G_j} X_i \ , $$
the inter-class variance is defined as:
$$ \sigma^2_e := \frac{1}{k} \sum_{j=1}^k \left\| m_j - m \right\|^2 \ .$$

On the other hand, considering each group separately, intra-class variances for each group $G_j$ are given by:
\begin{align*}
 \sigma^2_j & := \frac{1}{|G_j|} \sum_{X_i \in G_j} \left\| X_i - m_j \right\|_2^2 \ .
\end{align*}
Here $|G_j|$ stands for the cardinal of the group $G_j$. The intra-class variance is defined as:
$$
    \sigma^2_a := \frac{1}{k} \sum_{j=1}^k \sigma^2_j \ .
$$

In the end, performing a conditioning with respect to the groups, we invoke the law of total variance. We obtain that total variance decomposes into the sum of inter-class variance and intra-class variance:
\begin{align}
\label{eq:huygens_ward}
\sigma^2 & := \sigma^2_e + \sigma^2_a \ .
\end{align}
This is also known as the Huygens-Ward decomposition of the variance. Such a decomposition has a very clear statistical interpretation for the purposes of batch correction. Grouping the cells by batch, the goal is to minimize the inter-class variance while not destroying the intra-class variance. Grouping the cells by cell-type, the goal is maximizing cell separability by increasing inter-class variance.

\bibliographystyle{halpha}
\bibliography{FAE.bib}

\end{document}